\DeclareMathOperator*{\argmin}{arg\,min}
\DeclareMathOperator{\E}{\mathbb{E}}
\DeclarePairedDelimiter\abs{\lvert}{\rvert}%
\DeclarePairedDelimiter\norm{\lVert}{\rVert}%
\let\oldabs\abs
\def\abs{\@ifstar{\oldabs}{\oldabs*}}
\let\oldnorm\norm
\def\norm{\@ifstar{\oldnorm}{\oldnorm*}}
\newtheorem{lemma}{Lemma}
\theoremstyle{remark}
\theoremstyle{definition}
\newtheorem{definition}{Definition}
\title{Joint Contrastive Learning for \\Unsupervised Domain Adaptation}
\author{%
  Changhwa Park$^1$\quad Jonghyun Lee$^1$\quad Jaeyoon Yoo$^1$\quad Minhoe Hur$^2$\quad Sungroh Yoon$^{1,}$\thanks{Corresponding author: Sungroh Yoon <sryoon@snu.ac.kr>.}\\
  $^1$Data Science \& Artificial Intelligence Lab, Seoul National University\\
  $^2$Artificial Intelligence Research Lab, Hyundai Motor Group\\
  \texttt{omega6464@snu.ac.kr, leejh9611@gmail.com, yjy765@snu.ac.kr} \\
  \texttt{minhoe.hur@hyundai.com, sryoon@snu.ac.kr} \\
}
\begin{document}

\maketitle

\begin{abstract}
Enhancing feature transferability by matching marginal distributions has led to improvements in domain adaptation, although this is at the expense of feature discrimination.
In particular, the ideal joint hypothesis error in the target error upper bound, which was previously considered to be minute, has been found to be significant, impairing its theoretical guarantee.
In this paper, we propose an alternative upper bound on the target error that explicitly considers the joint error to render it more manageable.
With the theoretical analysis, we suggest a joint optimization framework that combines the source and target domains.
Further, we introduce Joint Contrastive Learning (JCL) to find class-level discriminative features, which is essential for minimizing the joint error.
With a solid theoretical framework, JCL employs contrastive loss to maximize the mutual information between a feature and its label, which is equivalent to maximizing the Jensen-Shannon divergence between conditional distributions.
Experiments on two real-world datasets demonstrate that JCL outperforms the state-of-the-art methods.
\end{abstract}

\section{Introduction}
\label{sec:introduction}
Deep neural networks have been successfully adopted in numerous applications~\cite{lecun2015deep, redmon2016you} and show promising performance.
In practice, however, collecting a large volume of labeled data for a new target domain is often expensive or even impractical and has become a considerable impediment to the application of deep learning algorithms.
To circumvent this problem, domain adaptation has been introduced~\cite{pan2009survey}, which utilizes labeled data from a source domain to classify the target domain.

The major characteristic of domain adaptation is the dataset shift~\cite{quionero2009dataset, gretton2009covariate} that precludes small target error when the classifier trained on the source domain is directly applied to the unlabeled target data.
A theoretical analysis provided by~\cite{ben2010theory} indicates that the target error is upper bounded by the sum of the source error, the domain discrepancy, and the error of the ideal joint hypothesis, while the last term is often treated as constant in the literature.
Several studies~\cite{tzeng2014deep, sun2016deep, pmlr-v37-ganin15, tzeng2017adversarial, pmlr-v37-long15, long2017deep} have focused on reducing the discrepancy between the marginal distributions of the domains in feature space.

Although matching the source and target feature distributions has led to increased accuracy as it enhances feature transferability, it has been at the expense of feature discriminability~\cite{chen2019transferability}.
To illustrate, methods based on marginal distribution alignment in the domain level may disregard class-conditional distributions.
As a result, it is possible that decision boundaries traverse high-density regions of the target domain, rendering the learned classifier vulnerable to misclassification~\cite{shu2018a, yoo2019learning}.

To enhance feature discriminability, recent domain adaptation methods have explored two strategies.
The first strategy matches the first-order statistics of conditional distributions~\cite{kang2019contrastive, deng2019cluster, xie2018learning}.
The second strategy uses pairwise loss~\cite{bromley1994signature, luo2018smooth} or triplet loss~\cite{schroff2015facenet} to learn discriminative features~\cite{deng2019cluster, chen2019joint, dai2019contrastively}.
There are two key issues with these methods.
First, reducing the distances between the source and target class centers can coarsely align the class-conditional distributions, but this is far from discriminative features.
Second, from an information-theoretic perspective, using pairwise loss or triplet loss cannot properly estimate and maximize the mutual information (MI) between a learned representation and its label, as discussed in Section~\ref{sec:4_2_1_Theoretical_guarantees}.
The MI between a feature and its label is equivalent to the Jensen-Shannon (JS) divergence between class-conditional distributions.
We therefore propose to maximize the MI, which theoretically guarantees the learning of class-wise discriminative features. 

In this paper, we first address the error of the ideal joint hypothesis, which has not been sufficiently investigated but can significantly affect the upper bound on the target error.
We suggest an alternative upper bound on the target error from the perspective of joint optimization to explicitly consider the joint hypothesis error.
The joint hypothesis error in the proposed upper bound is directly affected by the hypothesis, and hence, it is straightforward to implement.
Further, we propose a novel Joint Contrastive Learning (JCL) approach to unsupervised domain adaptation, which considers the union distribution of the source and target to minimize the proposed joint hypothesis error.
Within this framework, labeled data from the source domain and unlabeled data from the target domain are combined and jointly optimized to achieve a minimum joint error.
In particular, we enlarge the JS divergence between different class-conditional distributions of the combined dataset by maximizing the MI between a feature and its label using InfoNCE constrastive loss~\cite{oord2018representation}.
Experiments demonstrate that our proposed JCL achieves state-of-the-art results on several benchmark datasets.



\section{Preliminary}
\label{sec:preliminary}
We first introduce the setting of unsupervised domain adaptation and previous theoretical analysis~\cite{ben2010theory}.

\subsection{Problem setting and notations}
In an unsupervised domain adaptation framework, we have access to a set of labeled source data $\{(\boldsymbol{x}_s^i, y_s^i) \in (\mathcal{X} \times \mathcal{Y})\}_{i=1}^n$, sampled i.i.d. from the source domain distribution $\mathcal{D}_S$ and a set of unlabeled target data $\{\boldsymbol{x}_t^j \in \mathcal{X}\}_{j=1}^m$ sampled i.i.d. from the target domain distribution $\mathcal{D}_T$.
A domain is defined as a pair comprised of a distribution $\mathcal{D}$ in the input space $\mathcal{X}$ and a labeling function $f: \mathcal{X} \rightarrow \mathcal{Y}$, where the output space $\mathcal{Y}$ is $[0, 1]$ in the theoretical analysis.
Consequently, we use $\langle \mathcal{D}_S, f_S \rangle$ and $\langle \mathcal{D}_T, f_T \rangle$ to denote the source and target domains, respectively.
The goal of unsupervised domain adaptation is to learn a hypothesis function $h: \mathcal{X} \mapsto \mathcal{Y}$ that provides a good generalization in the target domain.
Formally, the error of a hypothesis $h$ with respect to a labeling function $f_T$ under the target domain distribution $\mathcal{D}_T$ is defined as $\epsilon_T(h, f_T) \coloneqq \mathbb{E}_{\boldsymbol{x} \sim \mathcal{D}_T}[\abs{h(\boldsymbol{x}) - f_T(\boldsymbol{x})}]$.

\subsection{Theoretical background for domain adaptation}
We review the theoretical basis of domain adaptation~\cite{ben2010theory}.
The ideal joint hypothesis is defined as,

\begin{definition} [\cite{ben2010theory}]
Let $\mathcal{H}$ be a hypothesis space.
The \textit{ideal joint hypothesis} is the hypothesis which minimizes the combined error: $h^* \coloneqq \argmin_{h \in \mathcal{H}} \epsilon_S (h, f_S) + \epsilon_T (h, f_T)$. We denote the combined error of the ideal hypothesis by $\lambda \coloneqq \epsilon_S (h^*, f_S) + \epsilon_T (h^*, f_T)$.
\end{definition}

\citet{ben2010theory} proposed the following theorem for the upper bound on the target error, which is used as the theoretical background for numerous unsupervised domain adaptation methods.

\begin{restatable}[\cite{ben2010theory}]{thm}{bendavid}
\label{bendavid_theorem}
Let $\mathcal{H}$ be a hypothesis space, then the expected target error is upper bounded as,
\[\epsilon _{T}(h, f_T) \leq \epsilon_{S}(h, f_S) + \frac{1}{2}d_{\mathcal{H}\Delta\mathcal{H}}(\mathcal{D}_S, \mathcal{D}_T) + \lambda,\]
where $d_{\mathcal{H}\Delta\mathcal{H}}(\mathcal{D}_S, \mathcal{D}_T)$ is $\mathcal{H}\Delta\mathcal{H}$-distance between the source and target distributions.
Formally, 
\[d_{\mathcal{H}\Delta\mathcal{H}}(\mathcal{D}_S, \mathcal{D}_T) \coloneqq 2 \sup_{h, h^\prime \in \mathcal{H}} \abs{\mathrm{Pr}_{\boldsymbol{x} \sim \mathcal{D}_S} [h(\boldsymbol{x}) \neq h^\prime(\boldsymbol{x})] - \mathrm{Pr}_{\boldsymbol{x} \sim \mathcal{D}_T} [h(\boldsymbol{x}) \neq h^\prime(\boldsymbol{x})]}.\]
\end{restatable}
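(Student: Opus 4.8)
The plan is to reproduce the classical argument of \citet{ben2010theory}, which rests on two ingredients: a triangle-type inequality for the error functional $\epsilon_{\mathcal{D}}(\cdot,\cdot)$, and the defining property of the symmetric-difference class $\mathcal{H}\Delta\mathcal{H}$. First I would establish that for any distribution $\mathcal{D}$ on $\mathcal{X}$ and any $f_1, f_2, f_3 : \mathcal{X} \to [0,1]$,
\[\epsilon_{\mathcal{D}}(f_1, f_2) \leq \epsilon_{\mathcal{D}}(f_1, f_3) + \epsilon_{\mathcal{D}}(f_3, f_2);\]
this follows by applying the pointwise triangle inequality $\abs{f_1(\boldsymbol{x}) - f_2(\boldsymbol{x})} \leq \abs{f_1(\boldsymbol{x}) - f_3(\boldsymbol{x})} + \abs{f_3(\boldsymbol{x}) - f_2(\boldsymbol{x})}$ and taking expectation over $\boldsymbol{x} \sim \mathcal{D}$. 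I would also record that $\epsilon_{\mathcal{D}}$ is symmetric in its two arguments and that, for $\{0,1\}$-valued hypotheses, $\epsilon_{\mathcal{D}}(h, h') = \mathrm{Pr}_{\boldsymbol{x} \sim \mathcal{D}}[h(\boldsymbol{x}) \neq h'(\boldsymbol{x})]$.

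Next, taking $h^*$ (the ideal joint hypothesis) as the intermediate function, I would write $\epsilon_T(h, f_T) \leq \epsilon_T(h, h^*) + \epsilon_T(h^*, f_T)$, so the work reduces to bounding $\epsilon_T(h, h^*)$. Here I would add and subtract $\epsilon_S(h, h^*)$ and control the gap by the $\mathcal{H}\Delta\mathcal{H}$-distance: since $h, h^* \in \mathcal{H}$, the disagreement region $\{\boldsymbol{x} : h(\boldsymbol{x}) \neq h^*(\boldsymbol{x})\}$ is exactly the set on which some element of $\mathcal{H}\Delta\mathcal{H}$ equals $1$, so $h$ and $h^*$ together form one of the pairs over which the supremum defining $d_{\mathcal{H}\Delta\mathcal{H}}$ ranges, whence
\[\abs{\epsilon_S(h, h^*) - \epsilon_T(h, h^*)} = \abs{\mathrm{Pr}_{\boldsymbol{x} \sim \mathcal{D}_S}[h(\boldsymbol{x}) \neq h^*(\boldsymbol{x})] - \mathrm{Pr}_{\boldsymbol{x} \sim \mathcal{D}_T}[h(\boldsymbol{x}) \neq h^*(\boldsymbol{x})]} \leq \frac{1}{2} d_{\mathcal{H}\Delta\mathcal{H}}(\mathcal{D}_S, \mathcal{D}_T).\]
Consequently $\epsilon_T(h, h^*) \leq \epsilon_S(h, h^*) + \frac{1}{2} d_{\mathcal{H}\Delta\mathcal{H}}(\mathcal{D}_S, \mathcal{D}_T)$.

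Finally, applying the triangle inequality once more on the source domain, $\epsilon_S(h, h^*) \leq \epsilon_S(h, f_S) + \epsilon_S(f_S, h^*)$, and chaining the preceding bounds together with the symmetry of $\epsilon_S$, I obtain
\[\epsilon_T(h, f_T) \leq \epsilon_S(h, f_S) + \frac{1}{2} d_{\mathcal{H}\Delta\mathcal{H}}(\mathcal{D}_S, \mathcal{D}_T) + \bigl(\epsilon_S(h^*, f_S) + \epsilon_T(h^*, f_T)\bigr),\]
and the parenthesized quantity equals $\lambda$ by definition of the ideal joint hypothesis, which is the claim.

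I expect the only genuinely delicate point to be the passage from $\abs{\mathrm{Pr}_{\mathcal{D}_S}[h \neq h^*] - \mathrm{Pr}_{\mathcal{D}_T}[h \neq h^*]}$ to half the $\mathcal{H}\Delta\mathcal{H}$-distance: this requires the hypotheses to be $\{0,1\}$-valued so that $\mathbf{1}[h \neq h^*]$ is a bona fide member of $\mathcal{H}\Delta\mathcal{H}$, and if one prefers to retain the $[0,1]$-valued convention of the problem setting one must first reduce to the thresholded binary case. Everything else is bookkeeping with the triangle inequality.
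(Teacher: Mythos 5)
Your proof is correct and follows the classical Ben-David et al.\ argument essentially verbatim. The paper itself does not prove Theorem~\ref{bendavid_theorem} (it is stated and attributed to \cite{ben2010theory}), but Appendix~A records exactly the two ingredients you use---the triangle inequality for $\epsilon_\mathcal{D}$ (Lemma~\ref{triangle_inequality}) and the bound $\abs{\epsilon_S(h,h') - \epsilon_T(h,h')} \leq \frac{1}{2}d_{\mathcal{H}\Delta\mathcal{H}}(\mathcal{D}_S,\mathcal{D}_T)$ (Lemma~\ref{h_divergence_inequality})---in order to prove the paper's own Theorems~\ref{alternative_theorem} and~\ref{pseudo_alternative_theorem}, so your proof is fully aligned with the paper's toolkit. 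Your closing caveat is also apt: the step $\epsilon_\mathcal{D}(h,h') = \mathrm{Pr}_{\boldsymbol{x}\sim\mathcal{D}}[h(\boldsymbol{x}) \neq h'(\boldsymbol{x})]$ used inside Lemma~\ref{h_divergence_inequality} does silently assume $\{0,1\}$-valued hypotheses even though the paper's problem-setting prose allows $\mathcal{Y}=[0,1]$; this is a looseness inherited from the source material, not an error in your argument.
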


Theorem~\ref{bendavid_theorem} shows that the target error is upper bounded by the sum of the source error, $\mathcal{H}\Delta\mathcal{H}$-distance between the domains, and the error of the ideal joint hypothesis, $\lambda$.
The last term $\lambda$ is often considered to be minute.
Accordingly, many recent works on domain adaptation endeavored to allow a feature encoder $g: \mathcal{X} \mapsto \mathcal{Z}$ to learn such that the induced distributions of the domains in feature space $\mathcal{Z}$ have a minimal $\mathcal{H}\Delta\mathcal{H}$-distance, while also minimizing the source error~\cite{pmlr-v37-ganin15, JMLR:v17:15-239, NIPS2018_8075, pei2018multi}.
However, as demonstrated in~\cite{chen2019transferability}, $\lambda$ can become substantial as feature transferability is enhanced.
In particular, it was observed in~\cite{chen2019transferability} that the optimal joint error on the learned feature representation of Domain Adversarial Neural Network (DANN)~\cite{pmlr-v37-ganin15} is much higher than that of pre-trained ResNet-50~\cite{he2016deep}.
This suggests that learning domain-invariant features is not sufficient to tightly bound the target error.

\section{Related work}
\label{sec:related_work}


\paragraph{Marginal distribution alignment.}
As suggested by the theoretical analysis in~\cite{ben2010theory}, aligning the marginal distributions can result in reducing target error, and is common practice in domain adaptation~\cite{tzeng2014deep, sun2016deep, pmlr-v37-ganin15, tzeng2017adversarial, pmlr-v37-long15, long2017deep}.
\citet{pmlr-v37-long15, long2017deep} utilized the Maximum Mean Discrepancy (MMD) and Joint MMD, respectively, for estimating and minimizing the domain discrepancy over the domain-specific layers.
Inspired by Generative Adversarial Networks (GANs)~\cite{goodfellow2014generative}, \citet{pmlr-v37-ganin15} introduced a domain discriminator to convert the domain confusion into a minmax optimization.
However, \citet{shu2018a} and \citet{pmlr-v97-zhao19a} theoretically demonstrated that finding invariant representations is not sufficient to guarantee a small target error, and \citet{chen2019transferability} empirically revealed an unexpected deterioration in discriminability while learning transferable representations.

\paragraph{Discriminative representation learning.}
Researchers have attempted to learn class-level discriminative features using two main technologies: first-order statistics matching and Siamese-networks training~\cite{bromley1994signature, luo2018smooth, schroff2015facenet}.
\citet{xie2018learning} aligned the source and target centroids to learn semantic representations of the target data.
Utilizing the MMD measurement, \citet{kang2019contrastive} proposed the minimization of intra-class discrepancy and maximizing the inter-class margin.
\citet{deng2019cluster} employed pairwise margin loss to learn discriminative features and minimized the distances between the first-order statistics to align the conditional distributions.
However, there are two main differences between the proposed JCL approach and these methods.
First, with the theoretical analysis that explicitly handles the joint hypothesis error, we propose to jointly optimize the source and target domains to have class-wise discriminative representations.
Second, we theoretically guarantee learning discriminative features from the perspective of JS divergence by maximizing the MI between a feature and its label.
Previous pairwise loss or triplet loss-based methods cannot properly bound and maximize the MI.


\paragraph{Contrastive learning.}
Contrastive learning has been adopted for self-supervised learning and has led to significant performance enhancement.
\citet{oord2018representation} introduced \textit{InfoNCE} loss to estimate and maximize the MI between a present context and a future signal.
In the image domain, \citet{hjelm2018learning}, \citet{bachman2019learning}, and \citet{chen2020simple} maximized the MI between features that originated from the same input with different augmentations.
From the information-theoretic perspective, \citet{Tschannen2020On} suggested that these methods could be subsumed under the same objective, \textit{InfoMax}~\cite{linsker1988self} (see Section~\ref{sec:4_2_1_Theoretical_guarantees} for more details), and provided a different perspective on the success of these methods.
As opposed to these methods, the proposed approach maximizes the MI between features from the same class to maximize the JS divergence between different class-conditional distributions.

\section{Method}
\label{sec:method}
\subsection{An alternative upper bound}
\label{sec:4_1_Alternative_Error}
In the past, it was assumed that the ideal joint hypothesis error was insignificant, and therefore, it was neglected. 
However, recent studies~\cite{chen2019transferability, pmlr-v97-zhao19a} have suggested that this error can become substantial, and so must be addressed adequately.
Computations of the ideal joint hypothesis in Theorem~\ref{bendavid_theorem} are usually intractable, because of which addressing the optimal joint error is challenging.
With the proposed method, we therefore aim to provide an alternative upper bound on the target error, which clearly incorporates the concept of joint error, and is free from the optimal hypothesis.
A small ideal joint hypothesis error implies that there exists a joint hypothesis, which generalizes well on both the source and target domains.
Intuitively, it is natural to consider jointly optimizing within the domains to minimize the joint error.
From this point of view, we define a combined domain $\langle \mathcal{D}_U, f_U \rangle$ as below.

\begin{definition}
Let $\phi_S$ and $\phi_T$ be the density functions of the source and target distributions, respectively.
Then, the distribution of the combined domain $\mathcal{D}_U$ is the mean distribution of the source and target distributions: $\phi_U(\boldsymbol{x}) \coloneqq \frac{1}{2} (\phi_S(\boldsymbol{x}) + \phi_T(\boldsymbol{x}))$.
The labeling function $f_U$ of the combined domain is defined similarly: $f_U(\boldsymbol{x}) \coloneqq \frac{1}{2} (f_S(\boldsymbol{x}) + f_T(\boldsymbol{x}))$.
\end{definition}

With the definition of the combined domain, the following theorem holds:

\begin{restatable}{thm}{alternative}
\label{alternative_theorem}
Let $\mathcal{H}$ be a hypothesis space, then the expected target error is upper bounded as,
\[\epsilon_{T}(h, f_T) \leq \epsilon_{S}(h, f_S) + \frac{1}{4}d_{\mathcal{H}\Delta\mathcal{H}}(\mathcal{D}_S, \mathcal{D}_T) + 2 \epsilon_{U}(h, f_U).\]    
\end{restatable}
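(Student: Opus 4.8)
The plan is to estimate $\epsilon_T(h, f_T)$ with a short sequence of triangle inequalities that passes through the combined labeling function $f_U$, exploiting two structural facts. First, $f_U$ is the \emph{midpoint} of $f_S$ and $f_T$, so $\abs{f_U(\boldsymbol{x}) - f_T(\boldsymbol{x})} = \abs{f_U(\boldsymbol{x}) - f_S(\boldsymbol{x})} = \tfrac12\abs{f_S(\boldsymbol{x}) - f_T(\boldsymbol{x})}$ pointwise. Second, $\mathcal{D}_U$ is the \emph{average} of $\mathcal{D}_S$ and $\mathcal{D}_T$, so $\epsilon_U(h, f_U) = \tfrac12\big(\mathbb{E}_{\boldsymbol{x}\sim\mathcal{D}_S}\abs{h(\boldsymbol{x}) - f_U(\boldsymbol{x})} + \mathbb{E}_{\boldsymbol{x}\sim\mathcal{D}_T}\abs{h(\boldsymbol{x}) - f_U(\boldsymbol{x})}\big)$; writing $\epsilon_S(h, f_U)$ and $\epsilon_T(h, f_U)$ for the two expectations on the right, this is just $2\epsilon_U(h, f_U) = \epsilon_S(h, f_U) + \epsilon_T(h, f_U)$. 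The first move is then the triangle inequality for the risk under a fixed distribution, $\epsilon_T(h, f_T) \le \epsilon_T(h, f_U) + \epsilon_T(f_U, f_T)$.

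For the second term, the midpoint identity gives $\epsilon_T(f_U, f_T) = \tfrac12\epsilon_T(f_S, f_T)$. Applying the $\mathcal{H}\Delta\mathcal{H}$-distance inequality to the pair $(f_S, f_T)$, namely $\epsilon_T(f_S, f_T) \le \epsilon_S(f_S, f_T) + \tfrac12 d_{\mathcal{H}\Delta\mathcal{H}}(\mathcal{D}_S, \mathcal{D}_T)$, and then using the midpoint identity on the source side, $\tfrac12\epsilon_S(f_S, f_T) = \epsilon_S(f_S, f_U) \le \epsilon_S(h, f_S) + \epsilon_S(h, f_U)$, yields $\epsilon_T(f_U, f_T) \le \epsilon_S(h, f_S) + \epsilon_S(h, f_U) + \tfrac14 d_{\mathcal{H}\Delta\mathcal{H}}(\mathcal{D}_S, \mathcal{D}_T)$. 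The coefficient $\tfrac14$ is exactly $\tfrac12 \times \tfrac12$: one factor from the standard $\mathcal{H}\Delta\mathcal{H}$ bound and one from the midpoint scaling applied to the whole term $\epsilon_T(f_S, f_T)$. For the first term, the averaging identity rearranges to $\epsilon_T(h, f_U) = 2\epsilon_U(h, f_U) - \epsilon_S(h, f_U)$. Adding the two bounds, the $\epsilon_S(h, f_U)$ contributions cancel and one is left with precisely $\epsilon_T(h, f_T) \le \epsilon_S(h, f_S) + \tfrac14 d_{\mathcal{H}\Delta\mathcal{H}}(\mathcal{D}_S, \mathcal{D}_T) + 2\epsilon_U(h, f_U)$.

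The step I expect to require the most care is the crossing from $\epsilon_T(f_S, f_T)$ to $\epsilon_S(f_S, f_T)$: the $\mathcal{H}\Delta\mathcal{H}$-distance only controls $\abs{\mathrm{Pr}_{\mathcal{D}_S}[g \neq g'] - \mathrm{Pr}_{\mathcal{D}_T}[g \neq g']}$ for $g, g'$ ranging over $\mathcal{H}$, so invoking it for the labeling functions $f_S, f_T$ relies on the (standard, usually implicit) assumption that the labeling functions lie in the hypothesis class — this is what makes the $\tfrac14 d_{\mathcal{H}\Delta\mathcal{H}}$ term legitimate, and it is worth stating explicitly. The only other things to watch are that the factor $\tfrac12$ is applied to the whole term $\epsilon_T(f_S, f_T)$ rather than piecewise (otherwise one loses the extra $\tfrac12$ and lands on $\tfrac12 d_{\mathcal{H}\Delta\mathcal{H}}$ or worse) and that the bookkeeping with $\epsilon_S(h, f_U)$ is arranged so that it cancels against the term coming from $\epsilon_T(h, f_U)$; everything else is routine manipulation of triangle inequalities.
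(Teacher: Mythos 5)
Your proof is correct and follows the same route as the paper's Appendix A: triangle inequality through $f_U$, the midpoint scaling $\epsilon_T(f_U,f_T)=\tfrac12\epsilon_T(f_S,f_T)$, the $\mathcal{H}\Delta\mathcal{H}$ lemma applied to the pair $(f_S,f_T)$, a triangle inequality involving $h$ to produce the $\epsilon_S(h,f_S)$ and $\epsilon_S(h,f_U)$ terms, and the averaging identity $\epsilon_S(h,f_U)+\epsilon_T(h,f_U)=2\epsilon_U(h,f_U)$; you merely organize it as two separate bounds that you add, rather than one chained inequality with an add-and-subtract of $\epsilon_S(h,f_U)$. Your caveat that applying the $\mathcal{H}\Delta\mathcal{H}$ lemma to $(f_S,f_T)$ implicitly assumes the labeling functions lie in $\mathcal{H}$ is accurate, and the paper's own proof carries the same implicit assumption.
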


\begin{proof}
The proof is provided in Appendix A.
\end{proof}

\paragraph{Comparison with Theorem~\ref{bendavid_theorem}.}
The main difference between Theorem~\ref{bendavid_theorem} and Theorem~\ref{alternative_theorem} lies in $\lambda$ in Theorem~\ref{bendavid_theorem} and $2 \epsilon_{U}(h, f_U)$ in Theorem~\ref{alternative_theorem}.
To illustrate, $\lambda$ in Theorem~\ref{bendavid_theorem} is composed of the ideal joint hypothesis, which is neither tractable nor manageable, and hence, it has been obliquely addressed~\cite{deng2019cluster, chen2019progressive, chen2019joint, saito2017asymmetric}.
On the contrary, $2 \epsilon_{U}(h, f_U)$, the alternative term in Theorem~\ref{alternative_theorem}, is directly affected by the hypothesis $h$, and thus, it is straightforward to utilize.
Differ from the previous studies that attempt to only alter $\lambda$ from Theorem~\ref{bendavid_theorem}, Theorem~\ref{alternative_theorem} cannot be directly derived from Theorem~\ref{bendavid_theorem}, because the second term in Theorem~\ref{alternative_theorem} is smaller than that in Theorem~\ref{bendavid_theorem}.


The main idea here is that joint optimization in the source and target domains is demanded upon simply matching the marginal distributions of the domains.
As the target labels are not provided, we must rely on the source labels.
However, optimization of the source domain alone can result in poor generalization of the target domain.
We therefore combine the source and target domains and propose their joint optimization.
To estimate the joint hypothesis error, we resort to target pseudo-labels, and the following theorem holds:

\begin{restatable}{thm}{pseudo}
\label{pseudo_alternative_theorem}
Let $\mathcal{H}$ be a hypothesis space, and $f_{\hat{T}}$ be a target pseudo-labeling function.
Accordingly, $f_{\hat{U}}$ is defined as, $f_{\hat{U}}(\boldsymbol{x}) \coloneqq \frac{1}{2} (f_S(\boldsymbol{x}) + f_{\hat{T}}(\boldsymbol{x}))$.
Then the expected target error is upper bounded as, 
\[\epsilon_{T}(h, f_T) \leq \epsilon_{S}(h, f_S) + \frac{1}{4}d_{\mathcal{H}\Delta\mathcal{H}}(\mathcal{D}_S, \mathcal{D}_T) + 2 \epsilon_{U}(h, f_{\hat{U}}) + \epsilon_{T}(f_T, f_{\hat{T}}).\]
\end{restatable}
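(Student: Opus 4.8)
The plan is to obtain Theorem~\ref{pseudo_alternative_theorem} as a two-line corollary of Theorem~\ref{alternative_theorem} combined with the triangle inequality for the $\ell_1$-type error functional. The crucial observation is that the combined distribution $\mathcal{D}_U$, as well as $d_{\mathcal{H}\Delta\mathcal{H}}(\mathcal{D}_S,\mathcal{D}_T)$, depends only on the input marginals $\mathcal{D}_S$ and $\mathcal{D}_T$, not on the labeling functions; replacing the true target labeler $f_T$ by a pseudo-labeling function $f_{\hat T}$ therefore leaves $\mathcal{D}_U$ untouched and merely turns $f_U=\tfrac12(f_S+f_T)$ into $f_{\hat U}=\tfrac12(f_S+f_{\hat T})$.

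Concretely, I would first apply Theorem~\ref{alternative_theorem} to the ``pseudo target domain'' $\langle \mathcal{D}_T, f_{\hat T}\rangle$ in place of $\langle \mathcal{D}_T, f_T\rangle$. Since $f_{\hat T}\colon\mathcal{X}\to[0,1]$ is itself a valid labeling function, the theorem applies verbatim and yields
\[\epsilon_{T}(h, f_{\hat T}) \leq \epsilon_{S}(h, f_S) + \frac{1}{4}d_{\mathcal{H}\Delta\mathcal{H}}(\mathcal{D}_S, \mathcal{D}_T) + 2 \epsilon_{U}(h, f_{\hat U}).\]
Next I would use the pointwise triangle inequality $\abs{h(\boldsymbol{x}) - f_T(\boldsymbol{x})} \leq \abs{h(\boldsymbol{x}) - f_{\hat T}(\boldsymbol{x})} + \abs{f_{\hat T}(\boldsymbol{x}) - f_T(\boldsymbol{x})}$ and take expectations under $\mathcal{D}_T$; by linearity this gives $\epsilon_{T}(h, f_T) \leq \epsilon_{T}(h, f_{\hat T}) + \epsilon_{T}(f_{\hat T}, f_T)$, and $\epsilon_{T}(f_{\hat T}, f_T)=\epsilon_{T}(f_T, f_{\hat T})$ by symmetry of the absolute value. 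Substituting the displayed bound for $\epsilon_{T}(h, f_{\hat T})$ produces exactly the claimed inequality.

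The argument is essentially routine once this decomposition is chosen; the only point requiring care is the first step, namely verifying that the proof of Theorem~\ref{alternative_theorem} in Appendix~A uses nothing about the target labeler beyond its being a $[0,1]$-valued function on $\mathcal{X}$, so that it may legitimately be re-instantiated with $f_{\hat T}$. I expect this to reduce to a one-line remark rather than a genuine obstacle, since both the combined-domain construction and the $\mathcal{H}\Delta\mathcal{H}$-distance touch $\mathcal{D}_S$ and $\mathcal{D}_T$ only through their input marginals, leaving all labeling-function-dependent quantities confined to the error terms where the triangle inequality does the remaining work.
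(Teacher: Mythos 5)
Your proof is correct and is essentially the same as the paper's: the authors likewise apply Lemma~\ref{triangle_inequality} to get $\epsilon_{T}(h, f_T) \leq \epsilon_{T}(h, f_{\hat{T}}) + \epsilon_{T}(f_T, f_{\hat{T}})$ and then observe that re-running the proof of Theorem~\ref{alternative_theorem} with $f_{\hat T}$ in place of $f_T$ bounds $\epsilon_{T}(h, f_{\hat{T}})$ by the remaining three terms. Your added remark that $\mathcal{D}_U$ and $d_{\mathcal{H}\Delta\mathcal{H}}$ depend only on the input marginals, so the substitution is legitimate, is a careful and accurate elaboration of what the paper leaves implicit.
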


\begin{proof}
The proof is provided in Appendix A.
\end{proof}


\subsection{Joint Contrastive Learning}
\label{sec:4_2_Joint_Contrastive_Learning}
As the learned feature representation has a substantial impact on classification error, it is important to learn discriminative features to minimize this error.
For instance, in unsupervised learning, which is similar to unsupervised domain adaptation because the target true labels are not available, discriminative feature learning has brought remarkable progress in a variety of tasks~\cite{chen2020simple, he2019momentum, oord2018representation}.
At a high level, the main idea here is that segregating the features of dissimilar samples helps a classifier to generalize well without requiring high model complexity.
Our proposed method also utilizes the notion of learning discriminative feature representation in minimizing the joint hypothesis error.

\subsubsection{Theoretical guarantees}
\label{sec:4_2_1_Theoretical_guarantees}
Formally, we aim to learn discriminative features on the intermediate representation space $\mathcal{Z}$ induced through the feature transformation $g$.
We denote the induced distribution of the combined domain $\mathcal{D}_U$ over the representation space $\mathcal{Z}$ as $\mathcal{D}^{\mathcal{Z}}_{U}$, and its class-conditional distribution as $\mathcal{D}^{\mathcal{Z}}_{U|y}$, where $y$ is a class label.
We can then formalize our objective with JS divergence $D_{\mathrm{JS}}$ as follows:

\begin{equation}
    \max_{\theta_g} D_{\mathrm{JS}}(\mathcal{D}^{\mathcal{Z}}_{U|0} \| \mathcal{D}^{\mathcal{Z}}_{U|1}),
\end{equation}

where $\theta_g$ denotes the parameters of the feature encoder $g$.
The values 0 and 1 are the class labels, and hence, the objective means maximizing the divergence between different class-conditional distributions.
We consider binary classification for the theoretical analysis here, but this approach can also be generalized to a multiclass classification problem (see Appendix B for more details).

Suppose that the label distribution of the combined domain is uniform, i.e., $P(y=0) = P(y=1)$.
In practice, this can be achieved by reformulating a dataset to be class-wise uniform, as described in Section~\ref{sec:4_2_2_Training_procedure}.
Let $Y$ be a uniform random variable that takes the value in $\{0, 1\}$ and let the distribution $\mathcal{D}^{\mathcal{Z}}_{U|Y}$ be the mixture of $\mathcal{D}^{\mathcal{Z}}_{U|0}$ and $\mathcal{D}^{\mathcal{Z}}_{U|1}$, according to $Y$.
We denote the induced feature random variable with the distribution $\mathcal{D}^{\mathcal{Z}}_{U|Y}$ as $\boldsymbol{Z}_{U|Y}$.
From the relation between JS divergence and MI, $D_{\mathrm{JS}}(\mathcal{D}^{\mathcal{Z}}_{U|0} \| \mathcal{D}^{\mathcal{Z}}_{U|1}) = I(Y;\boldsymbol{Z}_{U|Y})$ holds.
Therefore, we can transform our objective as follows:

\begin{equation}
    \max_{\theta_g} I(Y;\boldsymbol{Z}_{U|Y}).
\end{equation}

The MI between a label and a feature that is induced from the distribution conditioned on the label can be maximized using the following approach.
We employ the InfoNCE loss proposed by~\citet{oord2018representation} to estimate and maximize the MI.
InfoNCE is defined as,

\begin{equation}
    I(X;Y) 
    \geq \E \Bigg[\frac{1}{K} \sum_{i=1}^K \log \frac{e^{c(x_i, y_i)}}{\frac{1}{K} \sum_{j=1}^K e^{c(x_i, y_j)}}\Bigg]
    \triangleq I_{\mathrm{NCE}}(X;Y),
\end{equation}

where the expectation is over $K$ independent samples from the joint distribution $p(x, y)$~\cite{pmlr-v97-poole19a}.
$c(x, y)$ is a \textit{critic} function used to predict whether the inputs $x$ and $y$ were jointly drawn by yielding high values for the jointly drawn pairs and low values for the others~\cite{Tschannen2020On}.

The proposed JCL framework does not directly pair a feature and its label to maximize the MI between them.
Instead, features from the same conditional distribution are paired, and we use $I_{\mathrm{NCE}}$ to maximize the MI between them.
For a given $Y$, let $\boldsymbol{Z}_{U|Y}^{(1)}$ and $\boldsymbol{Z}_{U|Y}^{(2)}$ be two different features that are sampled from the same conditional distribution $\mathcal{D}^{\mathcal{Z}}_{U|Y}$.
Then, $I(\boldsymbol{Z}_{U|Y}^{(1)}; \boldsymbol{Z}_{U|Y}^{(2)}) \leq I(Y;\boldsymbol{Z}_{U|Y}^{(1)}, \boldsymbol{Z}_{U|Y}^{(2)})$ holds, with the data processing inequality (see Appendix C for more details).
Therefore, $\max_{\theta_g} I(\boldsymbol{Z}_{U|Y}^{(1)}; \boldsymbol{Z}_{U|Y}^{(2)})$ can be seen as a lower bound for our objective $\max_{\theta_g} I(Y;\boldsymbol{Z}_{U|Y})$, and we optimize it with our InfoNCE loss $\mathcal{L}_{\mathrm{c}}$, as described below.


\paragraph{Comparison with InfoMax objective.}
Comparing our objective $\max_{\theta_g} I(Y;\boldsymbol{Z}_{U|Y})$ with the InfoMax objective $\max_{\theta_g} I(\boldsymbol{X};g(\boldsymbol{X}))$~\cite{linsker1988self} provides instructive insights.
Recent progress on unsupervised representation learning~\cite{chen2020simple, oord2018representation, hjelm2018learning, tian2019contrastive} can be subsumed under the same objective $\max_{\theta_{g_1}, \theta_{g_2}} I(g_1(\boldsymbol{X}^{(1)});g_2(\boldsymbol{X}^{(2)}))$, where $\boldsymbol{X}^{(1)}$ and $\boldsymbol{X}^{(2)}$ are instances that originate from the same data~\cite{Tschannen2020On}.
Using the process similar to that derived above, it can be shown that the objective is a lower bound on the InfoMax objective.
The main difference is that the InfoMax principle essentially aims to maximize the MI between data and its representation, whereas our objective focuses on maximizing the divergence between different class-conditional distributions in the feature space.

\paragraph{Comparison with triplet loss-based methods.}
The multi-class-K-pair loss~\cite{sohn2016improved}, which is the generalized triplet loss~\cite{weinberger2009distance}, can be shown to be a special case of InfoNCE loss~\cite{Tschannen2020On}, and triplet loss is the same as in the $K=2$ case.
The drawback of using triplet loss to learn discriminative features is that it cannot tightly bound the MI when the MI is larger than $\log K$ because $I_{\mathrm{NCE}}$ is upper bounded by $\log K$.
Pairwise margin loss also compares only two features, and hence, it is also expected to have a loose bound.
Thus, triplet loss or pairwise loss-based domain adaptation methods~\cite{deng2019cluster, chen2019joint, dai2019contrastively} cannot guarantee class-level discriminative features from an information-theoretic perspective.

\subsubsection{Training procedure}
\label{sec:4_2_2_Training_procedure}

\begin{figure}[t]
    \centering
    \includegraphics[width=\linewidth]{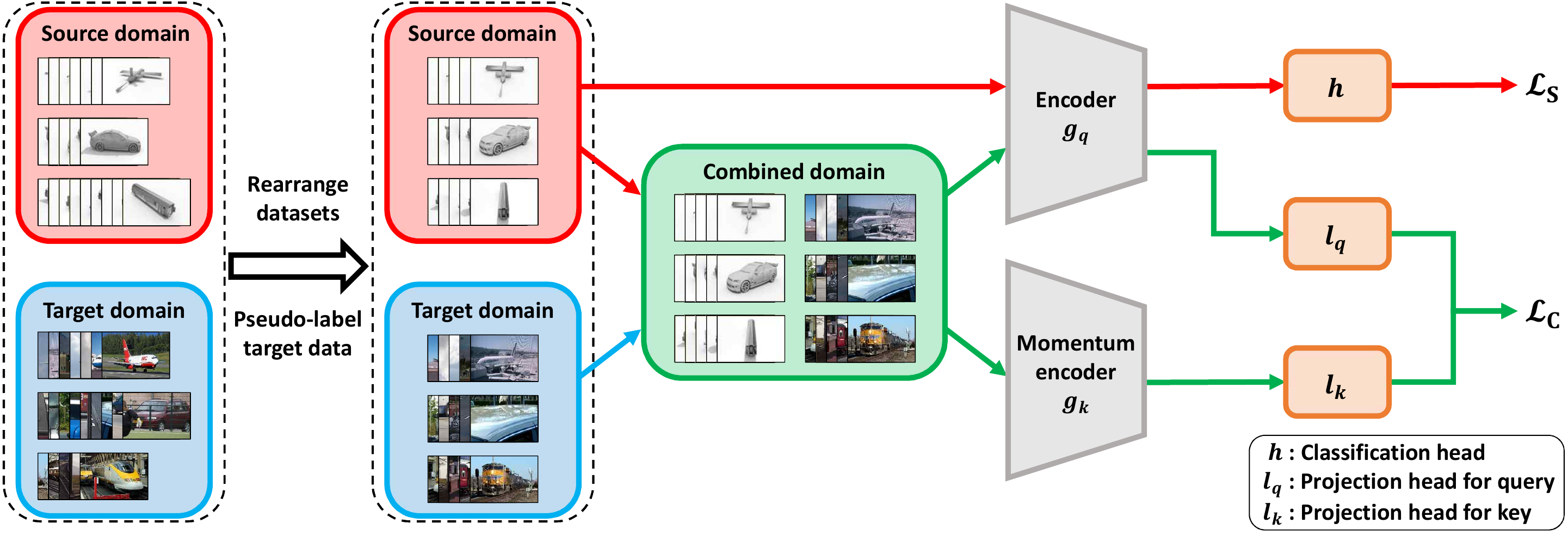}
    \caption{An overview of the JCL training process. To alleviate the problem of tradeoff between a small joint error and marginal distribution alignment when the label distributions are substantially different~\cite{pmlr-v97-zhao19a}, we propose rearranging the datasets to obtain uniform label distributions. For contrastive learning, we adopt \textit{Momentum Contrast}~\cite{he2019momentum}, which maintains a queue and a moving-averaged encoder on-the-fly to enable a large and consistent dictionary. After the encoders, a fully connected (FC) layer is applied for classification, and another FC layer is employed for contrastive loss.}
    \label{fig:jcl_overview}
\end{figure}

In this section, we formulate the loss functions and architecture of the method based on the aforementioned theoretical frameworks.
The overview of JCL is illustrated in Figure~\ref{fig:jcl_overview} and its pseudo-code is provided in Appendix D.
\textit{Momentum Contrast} (MoCo)~\cite{he2019momentum} is adopted as the proposed contrastive learning structure, with an encoder $g_q$ with parameters $\theta_q$ and a momentum-updated encoder $g_k$ with parameters $\theta_k$ for feature transformation $\mathcal{X} \mapsto \mathcal{Z}$.
$\theta_k$ are updated by $\theta_k \leftarrow m\theta_k + (1-m)\theta_q$, where $m \in [0, 1)$ is a momentum coefficient.
A fully connected (FC) layer projection head $l: \mathcal{Z} \mapsto \mathcal{W}$ is implemented to map the encoded representations to the space where the InfoNCE loss is applied.
Empirical tests determine that it is beneficial to define InfoNCE loss in the projected space $\mathcal{W}$ rather than $\mathcal{Z}$, which is in agreement with the results of~\citet{chen2020simple}.
For the feature pairs in the InfoNCE loss, an encoded query $\boldsymbol{w}_q=l_q(g_q(\boldsymbol{x}))$ and a key $\boldsymbol{w}_k=l_k(g_k(\boldsymbol{x}))$ from the queue of encoded features are used, where $l_q$ and $l_k$ are defined in a similar manner to $g_q$ and $g_k$, respectively.
We obtain the new keys on-the-fly by the momentum encoder and retain the queue of keys.
For the critic function $c$, we employ a cosine similarity function $\mathrm{sim}(\boldsymbol{u}, \boldsymbol{v}) = \boldsymbol{u}^\intercal \boldsymbol{v} / \|\boldsymbol{u}\| \|\boldsymbol{v}\|$ with a temperature hyper-parameter $\tau$ according to~\cite{wu2018unsupervised}.
Our InfoNCE loss $\mathcal{L}_\mathrm{c}$ is then formulated as follows:

\begin{equation}
    \mathcal{L}_{\mathrm{c}} = \E_{w_q \sim \mathcal{D}_U^W} \Bigg[ \E_{w_k^{+}} \Bigg[ -\log \frac{\exp (\mathrm{sim}(w_q, w_k^{+})/\tau)}{\sum_{w_k \in N_k \cup \{w_k^{+}\}}\exp (\mathrm{sim}(w_q, w_k) / \tau)} \Bigg] \Bigg],
\end{equation}

where $w_k^{+}$ is a feature that has the same label as $w_q$ and $N_k$ is a set of features that have different labels from $w_q$.
For the classification task, we have another FC layer $h$ as a classification head.
To guarantee a small source error, we employ the broadly used cross-entropy loss,

\begin{equation}
    \mathcal{L}_{\mathrm{s}} = \E_{(\boldsymbol{x}_s, y_s) \sim \mathcal{D}_S} \Big[ -\log h(g_q(\boldsymbol{x}_s))_{y_s} \Big].
\end{equation}

Combining $\mathcal{L}_\mathrm{s}$ and $\mathcal{L}_\mathrm{c}$ with a hyper-parameter $\gamma$, the overall objective is formulated as follows:

\begin{equation}
    \min_\theta \mathcal{L}_\mathrm{s} + \gamma \mathcal{L}_\mathrm{c}.
\end{equation}

The labels of the target data are required to recognize whether or not the two samples of the combined domain have the same label.
To facilitate this, we generate the pseudo-labels of the target data.
In particular, we perform spherical K-means clustering of target data on the feature space $\mathcal{Z}$ and assign labels at the begining of each epoch.
If the distance between a target sample and its assigned cluster center is larger than a constant $d$, then the target sample is excluded from the combined dataset.

\citet{pmlr-v97-zhao19a} showed that if the marginal label distributions of source and target domains are substantially different, a small joint error is not achievable while finding an invariant representation.
To address this problem, we suggest the reformulation of datasets to provide uniform label distributions, in which the number of data per class is equalized by data rearrangement.

\section{Experiments}
\label{sec:experiments}
\subsection{Datasets and baselines}
\textbf{ImageCLEF-DA}\footnote{\url{https://www.imageclef.org/2014/adaptation}}
is a real-world dataset consisting of three domains: \textit{Caltech-256} (\textbf{C}), \textit{ImageNet ILSVRC 2012} (\textbf{I}), and \textit{Pascal VOC 2012} (\textbf{P}).
Each domain contains 600 images from 12 common classes.
We evaluated all six possible transfer tasks among these three domains.

\textbf{VisDA-2017}~\cite{visda2017}
is a dataset for the synthetic-to-real transfer task and has a high dataset shift.
It includes 152,397 synthetic 2D renderings of 3D models and 55,388 real images across 12 classes.

\textbf{Baselines.}
We compare JCL with marginal distribution matching methods: Deep Adaptation Network (\textbf{DAN})~\cite{pmlr-v37-long15}, Domain Adversarial Neural Network (\textbf{DANN})~\cite{JMLR:v17:15-239}, and Joint Adaptation Network (\textbf{JAN})~\cite{long2017deep} and
also with methods that endeavor to learn discriminative features: Multi-Adversarial Domain Adaptation (\textbf{MADA})~\cite{pei2018multi}, Conditional Domain Adversarial Network (\textbf{CDAN})~\cite{long2018conditional}, Adversarial Dropout Regularization (\textbf{ADR})~\cite{saito2018adversarial}, Maximum Classifier Discrepancy (\textbf{MCD})~\cite{saito2018maximum}, Cluster Alignment with a Teacher (\textbf{CAT})~\cite{deng2019cluster}, and Contrastive Adaptation Network (\textbf{CAN})~\cite{kang2019contrastive}. 

\subsection{Implementation details}
We follow the standard experimental protocols for unsupervised domain adaptation~\cite{pmlr-v37-ganin15, long2017deep} and report the average accuracy over three independent runs.
To select the hyper-parameters, we use the same protocol as the one described in~\cite{pmlr-v37-long15}:
we train a source classifier and a domain classifier on a validation set that consists of labeled source data and unlabeled target data, and then, we jointly evaluate the test errors of the classifiers.
The selected hyper-parameters are detailed in Appendix E.

We adopt \textbf{ResNet-50} and \textbf{ResNet-101}~\cite{he2016deep} as base networks for ImageCLEF-DA and VisDA-2017, respectively, and we employ domain-specific batch normalization layers.
We finetune from ImageNet~\cite{deng2009imagenet} pre-trained models, with the exception of the last FC layer, which we replace with the task-specific FC layer.
We also add another FC layer with an output dimension of 256 for contrastive learning.
We utilize mini-batch SGD with momentum of 0.9 and follow the same learning rate schedule as~\cite{pmlr-v37-long15, JMLR:v17:15-239, long2017deep}: the learning rate $\eta_p$ is adjusted according to $\eta_p = \eta_0 (1+\alpha p)^{-\beta}$, where $p$ is the training progress that increases from 0 to 1.
The $\eta_0$ is the initial learning rate, which is set to 0.001 for the pre-trained layers and 0.01 for the added FC layers.
The $\alpha$ and $\beta$ are fixed to 10 and 0.75, respectively.
More details about the implementation is provided in Appendix F.

\subsection{Results}
\begin{table}
  \small
  \caption{Accuracy (\%) on ImageCLEF-DA for unsupervised domain adaptation (ResNet-50).}
  \label{imageclef_result}
  \centering
  \resizebox{\textwidth}{!}{\begin{tabular}{lccccccc}
    \toprule
    Method 
    & I $\rightarrow$ P
    & P $\rightarrow$ I
    & I $\rightarrow$ C
    & C $\rightarrow$ I
    & C $\rightarrow$ P
    & P $\rightarrow$ C
    & Average 
    \\
    \midrule
    ResNet-50~\cite{he2016deep}
    & 74.8 $\pm$ 0.3 & 83.9 $\pm$ 0.1 & 91.5 $\pm$ 0.3 & 78.0 $\pm$ 0.2 & 65.5 $\pm$ 0.3 & 91.2 $\pm$ 0.3 & 80.7 \\
    DANN~\cite{JMLR:v17:15-239}     
    & 75.0 $\pm$ 0.6 & 86.0 $\pm$ 0.3 & 96.2 $\pm$ 0.4 & 87.0 $\pm$ 0.5 & 74.3 $\pm$ 0.5 & 91.5 $\pm$ 0.6 & 85.0 \\
    DAN~\cite{pmlr-v37-long15}          
    & 74.5 $\pm$ 0.4 & 82.2 $\pm$ 0.2 & 92.8 $\pm$ 0.2 & 86.3 $\pm$ 0.4 & 69.2 $\pm$ 0.4 & 89.8 $\pm$ 0.4 & 82.5 \\  
    JAN~\cite{long2017deep}             
    & 76.8 $\pm$ 0.4 & 88.0 $\pm$ 0.2 & 94.7 $\pm$ 0.2 & 89.5 $\pm$ 0.3 & 74.2 $\pm$ 0.3 & 91.7 $\pm$ 0.3 & 85.8 \\
    MADA~\cite{pei2018multi}             
    & 75.0 $\pm$ 0.3 & 87.9 $\pm$ 0.2 & 96.0 $\pm$ 0.3 & 88.8 $\pm$ 0.3 & 75.2 $\pm$ 0.2 & 92.2 $\pm$ 0.3 & 85.8 \\
    CDAN+E~\cite{long2018conditional}             
    & 77.7 $\pm$ 0.3 & 90.7 $\pm$ 0.2 & \textbf{97.7} $\pm$ 0.3 & 91.3 $\pm$ 0.3 & 74.2 $\pm$ 0.2 & 94.3 $\pm$ 0.3 & 87.7 \\
    CAT~\cite{deng2019cluster}         
    & 77.2 $\pm$ 0.2 & 91.0 $\pm$ 0.3 & 95.5 $\pm$ 0.3 & 91.3 $\pm$ 0.3 & 75.3 $\pm$ 0.6 & 93.6 $\pm$ 0.5 & 87.3 \\
    \midrule
    JCL 
    & \textbf{78.1} $\pm$ 0.3 & \textbf{93.4} $\pm$ 0.2 & \textbf{97.7} $\pm$ 0.2 & \textbf{93.5} $\pm$ 0.4 & \textbf{78.1} $\pm$ 0.7 & \textbf{97.7} $\pm$ 0.4 & \textbf{89.8} \\
    \bottomrule
  \end{tabular}}
\end{table}

The results obtained using the ImageCLEF-DA dataset are reported in Table~\ref{imageclef_result}.
For all six adaptation scenarios, our proposed method outperforms the other baseline methods and achieves state-of-the-art accuracy.
In particular, the proposed method surpasses CAT by a substantial margin, validating the effectiveness of jointly learning discriminative features and the discussed information-theoretic guarantees.
Moreover, the methods that consider conditional distributions achieve higher accuracies than those that focus on marginal distribution matching.
These results suggest that learning discriminative features to minimize the joint hypothesis error is more crucial than general alignment.

\begin{figure}
\parbox[t]{0.45\textwidth}{\null
\centering
\small
  \vskip-\abovecaptionskip
  \captionof{table}[t]{Accuracy (\%) on VisDA-2017 for unsupervised domain adaptation (ResNet-101).}%
  \vskip\abovecaptionskip
  \label{visda_result}
  \begin{tabular}{lccc}
    \toprule
    Method 
    & car  
    & truck
    & Average 
    \\
    \midrule
    ResNet-101~\cite{he2016deep} & \textbf{91.7} & 25.9 & 49.4 \\
    DANN~\cite{JMLR:v17:15-239} & 44.3 & 7.8 & 57.4 \\
    DAN~\cite{pmlr-v37-long15} & 87.0 & 42.2 & 62.8 \\
    JAN~\cite{long2017deep} & 86.3 & 54.5 & 65.7 \\
    MCD~\cite{saito2018maximum} & 64.0 & 25.8 & 71.9 \\
    ADR~\cite{saito2018adversarial} & 65.3 & 32.3 & 74.8 \\
    SE~\cite{french2018selfensembling} & 58.6 & 47.9 & 84.3 \\
    CAN~\cite{kang2019contrastive} & 74.3 & 59.9 & 87.2 \\
    \midrule
    JCL & 66.8 & \textbf{71.8} & \textbf{87.6} \\
    \bottomrule
  \end{tabular}
}
\hfill
\parbox[t]{0.55\textwidth}{\null
  \begin{subfigure}[b]{0.27\textwidth}
    \centering
    \includegraphics[width=\textwidth]{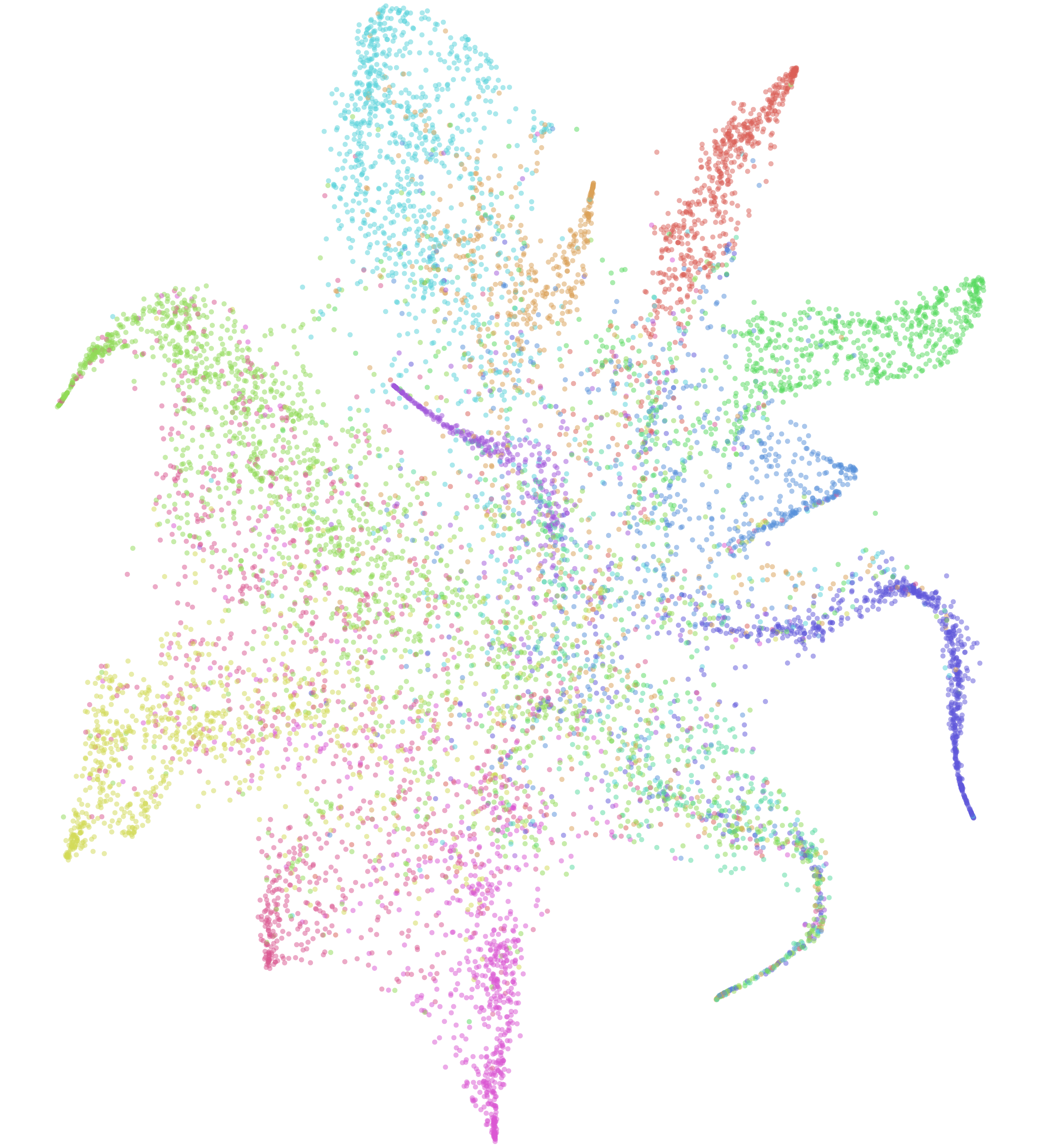}
  \end{subfigure}
  \hfill
  \begin{subfigure}[b]{0.27\textwidth}
    \centering
    \includegraphics[width=\textwidth]{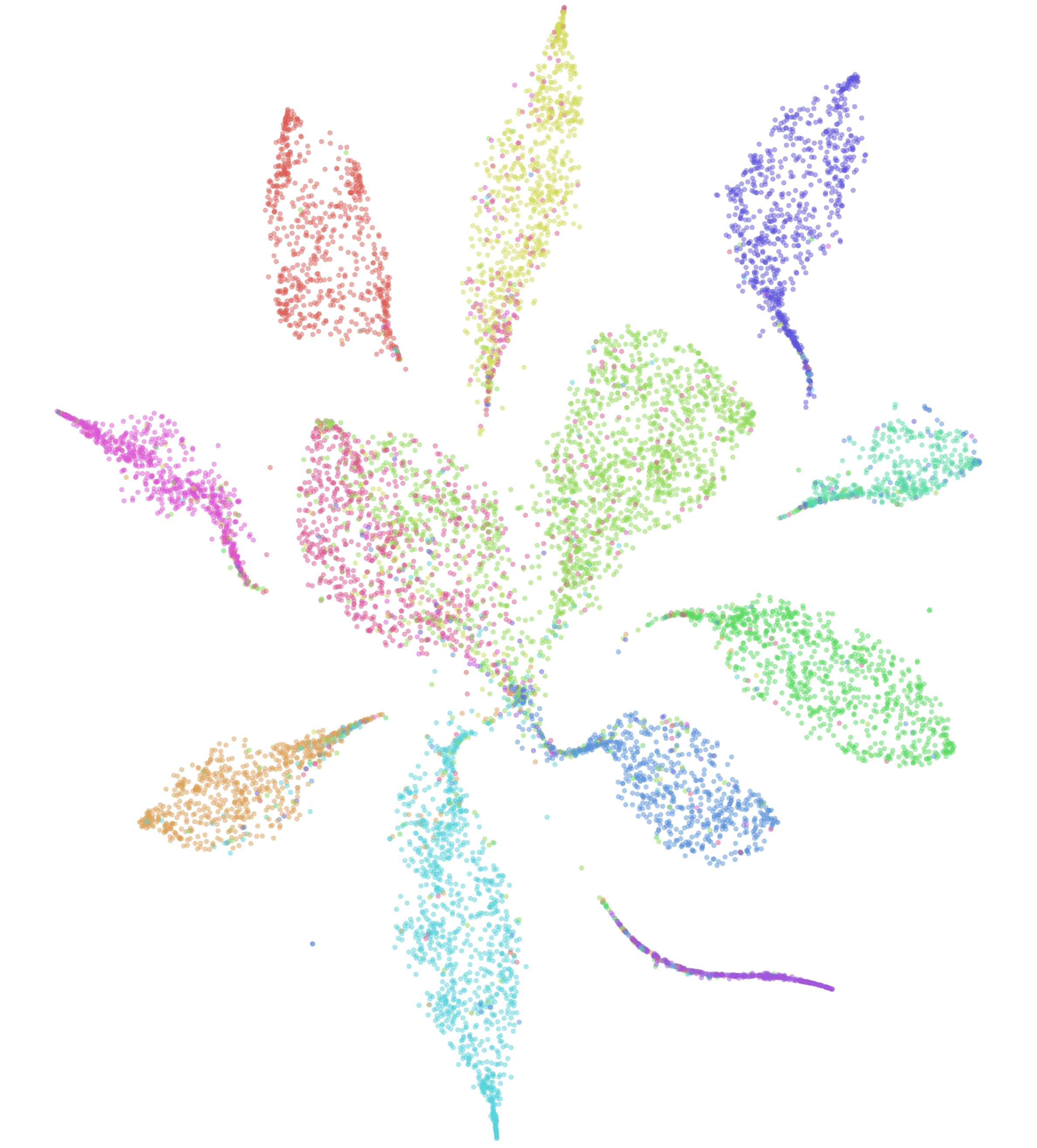}
  \end{subfigure}
  \captionof{figure}{Visualization for different methods (best viewed in color). \textbf{Left}: t-SNE of DANN. \textbf{Right}: JCL.}%
  \label{fig:tsne}
}
\end{figure}

In Table~\ref{visda_result}, the accuracies obtained for two particular classes and the average accuracy over all twelve classes on the VisDA-2017 transfer task are reported.
All the results pertaining to the VisDA-2017 transfer task are provided in Appendix G owing to space constraints in the main manuscript.
The selected classes are "\textit{car}," wherein the proposed method achieves the lowest accuracy among the twelve objects and "\textit{truck}," which is the most challenging object among the twelve objects.
Notably, the proposed method boosts the accuracy of the \textit{truck} class by a significant margin, and, on average, it outperforms the other baseline methods.
In particular, it advances the lowest accuracy among the twelve objects of CAN (59.9\%) by 6.9\%.
These results can be attributed to the MI maximization between a feature and its label which trades-off maximizing entropy $H(\boldsymbol{z})$ and minimizing conditional entropy $H(\boldsymbol{z}|y)$, and thus avoids degenerate solutions~\cite{caron2018deep} (see Appendix B for more details).

We visualize the learned target representations of the VisDA-2017 task by t-SNE~\cite{maaten2008visualizing} in Figure~\ref{fig:tsne} to compare our method with DANN in terms of feature discriminability.
While aligning the marginal distributions of the source and target domains, the features are not well discriminated with DANN.
On the contrary, the target features learned using our method are clearly discriminated, demonstrating that our objective to maximize the JS divergence between conditional distributions is achieved.


\subsection{Ablation studies}
\label{sec:ablation_studies}

\begin{figure}
     \centering
     \begin{subfigure}[b]{0.32\textwidth}
         \centering
         \includegraphics[width=\textwidth]{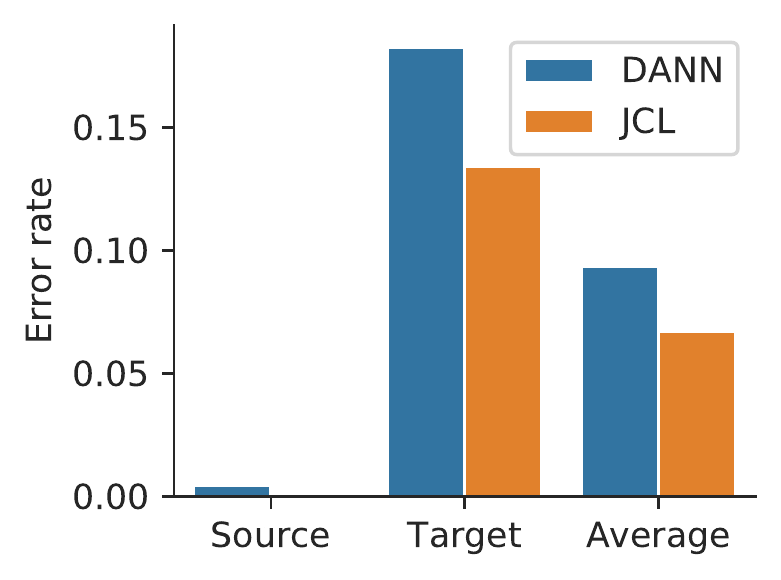}
         \caption{}
         \label{fig:error_rate}
     \end{subfigure}
     \hfill
     \begin{subfigure}[b]{0.32\textwidth}
         \centering
         \includegraphics[width=\textwidth]{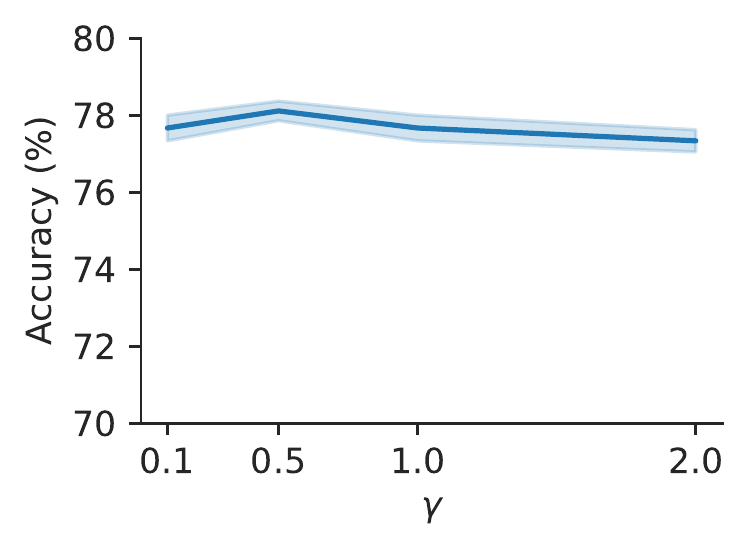}
         \caption{}
         \label{fig:sensitivity_i_p}
     \end{subfigure}
     \hfill
     \begin{subfigure}[b]{0.32\textwidth}
         \centering
         \includegraphics[width=\textwidth]{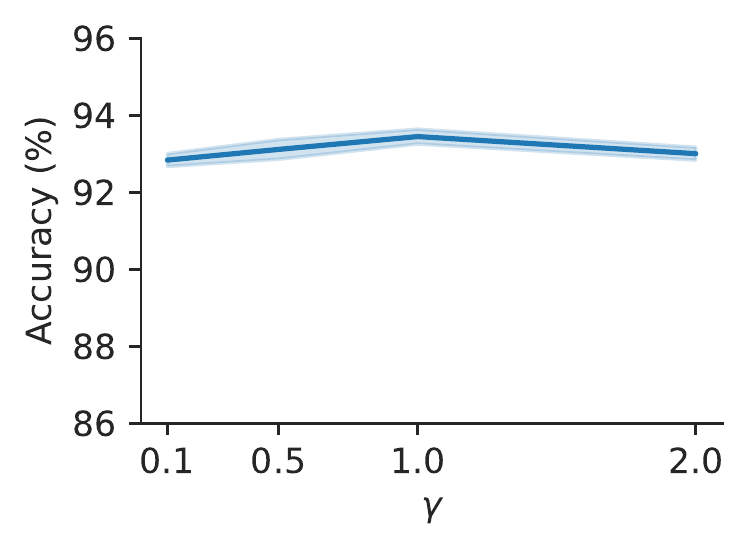}
         \caption{}
         \label{fig:sensitivity_p_i}
     \end{subfigure}
        \caption{(a) Classification error rate on the learned representations. (b-c) The accuracy sensitivity of JCL to $\gamma$ on \textbf{I} $\rightarrow$ \textbf{P} (b) and \textbf{P} $\rightarrow$ \textbf{I} (c). The results for other tasks are similar.}
        \label{fig:ablation_studies}
\end{figure}

To investigate the effectiveness of our method in minimizing the joint hypothesis error by learning discriminative representations, we conduct the same pilot analysis as~\citet{chen2019transferability};
we train a linear classifier on the representations learned using DANN and our method.
The linear classifier is trained on both source and target data using the labels.
The average error rate of the linear classifier corresponds to half of the ideal joint hypothesis error.
The results are shown in Figure~\ref{fig:error_rate}.
We can observe that the ideal joint hypothesis error of the representation learned using our method is significantly lower than that learned using DANN.
This implies that the proposed method is effective in achieving our objective to enhance feature discriminability.

We investigate the sensitivity of JCL to the weight hyper-parameter $\gamma$, and the results are shown in Figure~\ref{fig:sensitivity_i_p} and~\ref{fig:sensitivity_p_i}.
We could observe that JCL is not sensitive to the change in the value of $\gamma$.


\section{Conclusion}
\label{sec:conclusion}
In this study, we suggest an alternative upper bound on the target error to explicitly manage the joint hypothesis error.
The proposed upper bound with the joint hypothesis error provides a new perspective on the target error that the joint optimization on the both domains is demanded.
Further, a novel approach to domain adaptation, JCL, is proposed to minimize the joint error.
The proposed approach differs from previous domain adaptation methods that consider conditional distributions, as it can maximize the JS divergence between class-conditional distributions with information-theoretic guarantees.
The effectiveness of the proposed method is validated with several experiments.

\section{Broader impact}
\label{sec:broader_impact}
Our study advances unsupervised domain adaptation, which utilizes a labeled source domain to classify the target domain well. 
This study is important for our future society because it circumvents a dataset shift.
First, individuals from different regions can derive benefits in utilizing deep neural network models by the lowered barrier between regions or cultures. 
Second, researchers and developers can save their efforts in distributing models worldwide by minimizing costs. 
We believe that the assumptions, data, and algorithms adopted in this study do not have any ethical issues.


\medskip

\begin{small}
\bibliographystyle{abbrvnat}
\bibliography{references}
\end{small}

\appendix
\def\thesection{\Alph{section}}

\section{Proofs}
In this section, we provide the proofs of Theorem~\ref{alternative_theorem} and Theorem~\ref{pseudo_alternative_theorem} in the main manuscript.
We first introduce lemmas that are useful in proving the theorems.

\begin{lemma}
\label{triangle_inequality}
Let $\mathcal{H}$ be a hypothesis space and $\mathcal{D}$ be any distribution over input space $\mathcal{X}$. Then $\forall h, h^\prime, h^{\prime\prime} \in \mathcal{H}$, the following triangle inequality holds:
\[\epsilon_\mathcal{D} (h, h^\prime) \leq \epsilon_\mathcal{D} (h, h^{\prime\prime}) + \epsilon_\mathcal{D} (h^{\prime\prime}, h^\prime).\]
\end{lemma}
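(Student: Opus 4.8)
The plan is to reduce the claimed inequality between expected errors to the ordinary triangle inequality for the absolute value on $\mathbb{R}$, applied pointwise, combined with monotonicity and linearity of expectation. Recall that by definition $\epsilon_\mathcal{D}(h, h^\prime) = \mathbb{E}_{\boldsymbol{x} \sim \mathcal{D}}[\abs{h(\boldsymbol{x}) - h^\prime(\boldsymbol{x})}]$, where $h, h^\prime, h^{\prime\prime}$ map $\mathcal{X}$ into $[0,1]$, so all the quantities involved are well-defined and finite.

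First I would fix an arbitrary point $\boldsymbol{x} \in \mathcal{X}$ and use the decomposition $h(\boldsymbol{x}) - h^\prime(\boldsymbol{x}) = (h(\boldsymbol{x}) - h^{\prime\prime}(\boldsymbol{x})) + (h^{\prime\prime}(\boldsymbol{x}) - h^\prime(\boldsymbol{x}))$; applying the scalar triangle inequality $\abs{a + b} \le \abs{a} + \abs{b}$ gives $\abs{h(\boldsymbol{x}) - h^\prime(\boldsymbol{x})} \le \abs{h(\boldsymbol{x}) - h^{\prime\prime}(\boldsymbol{x})} + \abs{h^{\prime\prime}(\boldsymbol{x}) - h^\prime(\boldsymbol{x})}$ for every $\boldsymbol{x}$.

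Next I would integrate this pointwise bound against $\mathcal{D}$: since the inequality holds for all $\boldsymbol{x}$, monotonicity of expectation gives $\mathbb{E}_{\boldsymbol{x} \sim \mathcal{D}}[\abs{h(\boldsymbol{x}) - h^\prime(\boldsymbol{x})}] \le \mathbb{E}_{\boldsymbol{x} \sim \mathcal{D}}[\abs{h(\boldsymbol{x}) - h^{\prime\prime}(\boldsymbol{x})} + \abs{h^{\prime\prime}(\boldsymbol{x}) - h^\prime(\boldsymbol{x})}]$, and linearity of expectation splits the right-hand side as $\mathbb{E}_{\boldsymbol{x} \sim \mathcal{D}}[\abs{h(\boldsymbol{x}) - h^{\prime\prime}(\boldsymbol{x})}] + \mathbb{E}_{\boldsymbol{x} \sim \mathcal{D}}[\abs{h^{\prime\prime}(\boldsymbol{x}) - h^\prime(\boldsymbol{x})}]$. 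Rewriting the three terms via the definition of $\epsilon_\mathcal{D}$ yields exactly $\epsilon_\mathcal{D}(h, h^\prime) \le \epsilon_\mathcal{D}(h, h^{\prime\prime}) + \epsilon_\mathcal{D}(h^{\prime\prime}, h^\prime)$, which is the claim.

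I expect essentially no obstacle: the only point worth noting is that all three expectations exist, which is immediate since each integrand is bounded by $1$ (the hypotheses take values in $[0,1]$), so no integrability concern arises. The same argument applies verbatim when one of the arguments is an arbitrary labeling function rather than a hypothesis, which is the form needed in the proofs of Theorems~\ref{alternative_theorem} and~\ref{pseudo_alternative_theorem}.
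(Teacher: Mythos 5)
Your proof is correct and follows essentially the same route as the paper's: the pointwise insertion of $h^{\prime\prime}(\boldsymbol{x})$, the scalar triangle inequality, then monotonicity and linearity of expectation. The paper simply performs these steps in a single chain of (in)equalities inside the expectation rather than separating the pointwise bound from the integration, but the argument is identical.
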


\begin{proof}
From the definition of the error and the triangle inequality of norm, we have
\[
\begin{split}
\epsilon_\mathcal{D} (h, h^\prime) 
&= \mathbb{E}_{\boldsymbol{x} \sim \mathcal{D}}[\abs{h(\boldsymbol{x}) - h^\prime(\boldsymbol{x})}] \\
&= \mathbb{E}_{\boldsymbol{x} \sim \mathcal{D}}[\abs{h(\boldsymbol{x}) - h^\prime(\boldsymbol{x}) + h^{\prime\prime}(\boldsymbol{x}) - h^{\prime\prime}(\boldsymbol{x})}] \\
&\leq \mathbb{E}_{\boldsymbol{x} \sim \mathcal{D}}[\abs{h(\boldsymbol{x}) - h^{\prime\prime}(\boldsymbol{x})} + \abs{h^{\prime\prime}(\boldsymbol{x}) - h^\prime(\boldsymbol{x})}] \\
&= \mathbb{E}_{\boldsymbol{x} \sim \mathcal{D}}[\abs{h(\boldsymbol{x}) - h^{\prime\prime}(\boldsymbol{x})}] + \mathbb{E}_{\boldsymbol{x} \sim \mathcal{D}}[\abs{h^{\prime\prime}(\boldsymbol{x}) - h^\prime(\boldsymbol{x})}] \\
&= \epsilon_\mathcal{D} (h, h^{\prime\prime}) + \epsilon_\mathcal{D} (h^{\prime\prime}, h^\prime).
\end{split}
\]
\end{proof}

\begin{lemma}[\cite{ben2010theory}]
\label{h_divergence_inequality}
For any hypothesis $h, h^\prime \in \mathcal{H}$, the following inequality holds:
\[
\abs{\epsilon_S (h, h^\prime) - \epsilon_T (h, h^\prime)} \leq \frac{1}{2}d_{\mathcal{H}\Delta\mathcal{H}}(\mathcal{D}_S, \mathcal{D}_T).
\]
\end{lemma}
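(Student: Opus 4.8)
The plan is to read off the claimed inequality directly from the definition of $d_{\mathcal{H}\Delta\mathcal{H}}$: the left‑hand side is literally one of the quantities being maximized on the right. First I would record the elementary identity that, for $\{0,1\}$-valued hypotheses, the absolute‑value error equals the disagreement probability: for any distribution $\mathcal{D}$ over $\mathcal{X}$ and any $h, h^\prime \in \mathcal{H}$,
\[
\epsilon_\mathcal{D}(h, h^\prime) = \mathbb{E}_{\boldsymbol{x} \sim \mathcal{D}}\big[\abs{h(\boldsymbol{x}) - h^\prime(\boldsymbol{x})}\big] = \mathrm{Pr}_{\boldsymbol{x} \sim \mathcal{D}}[h(\boldsymbol{x}) \neq h^\prime(\boldsymbol{x})],
\]
since $\abs{h(\boldsymbol{x}) - h^\prime(\boldsymbol{x})}$ is the indicator of the event $\{h(\boldsymbol{x}) \neq h^\prime(\boldsymbol{x})\}$.

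Applying this identity to both $\mathcal{D}_S$ and $\mathcal{D}_T$ gives
\[
\abs{\epsilon_S(h, h^\prime) - \epsilon_T(h, h^\prime)} = \abs{\mathrm{Pr}_{\boldsymbol{x} \sim \mathcal{D}_S}[h(\boldsymbol{x}) \neq h^\prime(\boldsymbol{x})] - \mathrm{Pr}_{\boldsymbol{x} \sim \mathcal{D}_T}[h(\boldsymbol{x}) \neq h^\prime(\boldsymbol{x})]}.
\]
The right‑hand side is exactly the expression inside the supremum in the definition of $d_{\mathcal{H}\Delta\mathcal{H}}(\mathcal{D}_S, \mathcal{D}_T)$ recalled in Theorem~\ref{bendavid_theorem}, evaluated at the particular pair $(h, h^\prime)$; hence it is bounded above by the supremum over all such pairs, which equals $\tfrac{1}{2} d_{\mathcal{H}\Delta\mathcal{H}}(\mathcal{D}_S, \mathcal{D}_T)$. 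Chaining the two displays yields the lemma.

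The hard part is only bookkeeping, not mathematics. The one point that needs care is the tension between the convention that the output space is written as $[0,1]$ and the identity $\epsilon_\mathcal{D}(h, h^\prime) = \mathrm{Pr}[h \neq h^\prime]$, which presupposes that the \emph{hypotheses} $h, h^\prime$ are deterministic $\{0,1\}$-valued maps (the $[0,1]$ range is relevant only for the labeling functions $f_S, f_T$). I would therefore state the binary‑hypothesis convention explicitly before the proof. If one prefers to keep $h, h^\prime$ general, the same two‑line argument applies verbatim once each disagreement probability is replaced by the corresponding $\mathbb{E}_{\boldsymbol{x}\sim\mathcal{D}}[\abs{h(\boldsymbol{x}) - h^\prime(\boldsymbol{x})}]$ and the matching $L^1$ form of the $\mathcal{H}\Delta\mathcal{H}$-distance is used.
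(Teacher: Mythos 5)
Your proof is correct and follows essentially the same route as the paper: identify $\epsilon_\mathcal{D}(h,h^\prime)$ with the disagreement probability $\mathrm{Pr}_{\boldsymbol{x}\sim\mathcal{D}}[h(\boldsymbol{x})\neq h^\prime(\boldsymbol{x})]$ and then bound the particular pair $(h,h^\prime)$ by the supremum defining $d_{\mathcal{H}\Delta\mathcal{H}}$. The paper simply writes the chain in the opposite direction (starting from the definition of $d_{\mathcal{H}\Delta\mathcal{H}}$), and your explicit remark about the $\{0,1\}$-valued hypothesis convention is a sensible clarification that the paper leaves implicit.
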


\begin{proof}
From the definition of the $\mathcal{H}\Delta\mathcal{H}$-distance, we have
\[
\begin{split}
d_{\mathcal{H}\Delta\mathcal{H}}(\mathcal{D}_S, \mathcal{D}_T)
&= 2 \sup_{h, h^\prime \in \mathcal{H}} \abs{\mathrm{Pr}_{\boldsymbol{x} \sim \mathcal{D}_S} [h(\boldsymbol{x}) \neq h^\prime(\boldsymbol{x})] - \mathrm{Pr}_{\boldsymbol{x} \sim \mathcal{D}_T} [h(\boldsymbol{x}) \neq h^\prime(\boldsymbol{x})]} \\
&= 2 \sup_{h, h^\prime \in \mathcal{H}} \abs{\epsilon_S (h, h^\prime) - \epsilon_T (h, h^\prime)} \\
&\geq 2 \abs{\epsilon_S (h, h^\prime) - \epsilon_T (h, h^\prime)}.
\end{split}
\]
\end{proof}

\alternative*

\begin{proof}
From Lemma~\ref{triangle_inequality}, Lemma~\ref{h_divergence_inequality}, and the definition of $\mathcal{D}_U$ and $f_U$, we have
\[
\begin{split}
\epsilon_{T}(h, f_T)
&\leq \epsilon_{T}(h, f_U) + \epsilon_{T}(f_U, f_T) \\
&= \epsilon_{S}(h, f_U) + \epsilon_{T}(h, f_U) + \epsilon_{T}(f_U, f_T) - \epsilon_{S}(h, f_U) \\
&\leq \epsilon_{S}(h, f_U) + \epsilon_{T}(h, f_U) + \epsilon_{T}(f_U, f_T) + \epsilon_{S}(h, f_S) - \epsilon_{S}(f_U, f_S) \\
&= \epsilon_{S}(h, f_S) + \epsilon_{S}(h, f_U) + \epsilon_{T}(h, f_U) \\
&\phantom{--------}+ \mathbb{E}_{\boldsymbol{x} \sim \mathcal{D}_T}[\abs{f_U(\boldsymbol{x}) - f_T(\boldsymbol{x})}] - \mathbb{E}_{\boldsymbol{x} \sim \mathcal{D}_S}[\abs{f_U(\boldsymbol{x}) - f_S(\boldsymbol{x})}] \\
&= \epsilon_{S}(h, f_S) + \epsilon_{S}(h, f_U) + \epsilon_{T}(h, f_U) \\
&\phantom{--------}+ \frac{1}{2}\mathbb{E}_{\boldsymbol{x} \sim \mathcal{D}_T}[\abs{f_S(\boldsymbol{x}) - f_T(\boldsymbol{x})}] - \frac{1}{2}\mathbb{E}_{\boldsymbol{x} \sim \mathcal{D}_S}[\abs{f_S(\boldsymbol{x}) - f_T(\boldsymbol{x})}] \\
&= \epsilon_{S}(h, f_S) + \epsilon_{S}(h, f_U) + \epsilon_{T}(h, f_U) + \frac{1}{2}\epsilon_{T}(f_S, f_T) - \frac{1}{2}\epsilon_{S}(f_S, f_T) \\
&\leq \epsilon_{S}(h, f_S) + \epsilon_{S}(h, f_U) + \epsilon_{T}(h, f_U) + \frac{1}{2}\abs{\epsilon_{T}(f_S, f_T) - \epsilon_{S}(f_S, f_T)} \\
&\leq \epsilon_{S}(h, f_S) + \epsilon_{S}(h, f_U) + \epsilon_{T}(h, f_U) + 
\frac{1}{4}d_{\mathcal{H}\Delta\mathcal{H}}(\mathcal{D}_S, \mathcal{D}_T) \\
&= \epsilon_{S}(h, f_S) + \frac{1}{4}d_{\mathcal{H}\Delta\mathcal{H}}(\mathcal{D}_S, \mathcal{D}_T) \\
&\phantom{--------}+ \int \phi_S(\boldsymbol{x}) \abs{h(\boldsymbol{x}) - f_U(\boldsymbol{x})} \mathrm{d}\boldsymbol{x} + \int \phi_T(\boldsymbol{x}) \abs{h(\boldsymbol{x}) - f_U(\boldsymbol{x})} \mathrm{d}\boldsymbol{x} \\
&= \epsilon_{S}(h, f_S) + \frac{1}{4}d_{\mathcal{H}\Delta\mathcal{H}}(\mathcal{D}_S, \mathcal{D}_T) + 2 \int \frac{1}{2}(\phi_S(\boldsymbol{x}) + \phi_T(\boldsymbol{x})) \abs{h(\boldsymbol{x}) - f_U(\boldsymbol{x})} \mathrm{d}\boldsymbol{x} \\
&= \epsilon_{S}(h, f_S) + \frac{1}{4}d_{\mathcal{H}\Delta\mathcal{H}}(\mathcal{D}_S, \mathcal{D}_T) + 2 \int \phi_U(\boldsymbol{x}) \abs{h(\boldsymbol{x}) - f_U(\boldsymbol{x})} \mathrm{d}\boldsymbol{x} \\
&= \epsilon_{S}(h, f_S) + \frac{1}{4}d_{\mathcal{H}\Delta\mathcal{H}}(\mathcal{D}_S, \mathcal{D}_T) + 2 \epsilon_{U}(h, f_U).
\end{split}
\]
\end{proof}

\pseudo*

\begin{proof}
By Lemma~\ref{triangle_inequality}, the following inequality holds.
\[
\epsilon_{T}(h, f_T) \leq \epsilon_{T}(h, f_{\hat{T}}) + \epsilon_{T}(f_T, f_{\hat{T}}).
\]
Meanwhile, using the same process in the proof of Theorem~\ref{alternative_theorem}, we know that
\[\epsilon_{T}(h, f_{\hat{T}}) \leq \epsilon_{S}(h, f_S) + \frac{1}{4}d_{\mathcal{H}\Delta\mathcal{H}}(\mathcal{D}_S, \mathcal{D}_T) + 2 \epsilon_{U}(h, f_{\hat{U}}).\]
Combining the above two inequalities yields the proof.
\end{proof}

\section{Generalization to multiclass classification}

Here, we introduce how the proposed theoretical background can be generalized to a multiclass classification problem and explain why degenerate solutions can be avoided from an information-theoretic perspective.
The generalized Jensen-Shannon (JS) divergence is defined as:

\begin{definition} [\cite{lin1991divergence}]
Let $\mathcal{D}_1, \mathcal{D}_2, \cdots, \mathcal{D}_n$ be n probability distributions with weights $\pi_1, \pi_2, \cdots, \pi_n$, respectively, and let $Z_1, Z_2, \cdots, Z_n$ be random variables with distributions $\mathcal{D}_1, \mathcal{D}_2, \cdots, \mathcal{D}_n$, respectively.
Then, the generalized JS divergence is defined as:

\[
D_{\mathrm{JS}}^\pi (\mathcal{D}_1, \mathcal{D}_2, \cdots, \mathcal{D}_n) = H (Z) - \sum_{i=1}^n \pi_i H(Z_i),
\]

where $\pi$ is $(\pi_1, \pi_2, \cdots, \pi_n)$ and $Z$ is a random variable with the mixture distribution of $\mathcal{D}_1, \mathcal{D}_2, \cdots, \mathcal{D}_n$ with weights $\pi_1, \pi_2, \cdots, \pi_n$, respectively.
\end{definition}

The generalized JS divergence measures the overall difference among a finite number of probability distributions.
Notably, for a fixed $\pi$, the Bayes probability of error~\cite{hellman1970probability} is minimized if the generalized JS divergence is maximized~\cite{lin1991divergence}.
With this divergence, we can generalize our objective to learn discriminative features in the representation space, $\mathcal{Z}$, as follows.

\begin{equation}
    \max_{\theta_g} D_{\mathrm{JS}}^\pi(\mathcal{D}^{\mathcal{Z}}_{U|0}, \mathcal{D}^{\mathcal{Z}}_{U|1}, \cdots, \mathcal{D}^{\mathcal{Z}}_{U|C-1}),
\end{equation}

where $\pi$ denotes the marginal label distribution and $C$ denotes the number of classes.
From the definition of the generalized JS divergence, we know that

\begin{equation}
\label{general_js_and_mi}
    \begin{split}
        D_{\mathrm{JS}}^\pi(\mathcal{D}^{\mathcal{Z}}_{U|0}, \mathcal{D}^{\mathcal{Z}}_{U|1}, \cdots, \mathcal{D}^{\mathcal{Z}}_{U|C-1})
        &= H (\boldsymbol{Z}_{U|Y}) - \sum_{y=1}^n \pi_y H(\boldsymbol{Z}_{U|y}) \\
        &= H (\boldsymbol{Z}_{U|Y}) - \sum_{y=1}^n P(Y=y) H(\boldsymbol{Z}_{U|Y} | Y=y) \\
        &= H (\boldsymbol{Z}_{U|Y}) - H (\boldsymbol{Z}_{U|Y} | Y) \\
        &= I (Y;\boldsymbol{Z}_{U|Y}).
    \end{split}
\end{equation}

Therefore, we can transform our objective as follows:

\begin{equation}
    \max_{\theta_g} I(Y;\boldsymbol{Z}_{U|Y}).
\end{equation}

With the same theoretical framework introduced in the main manuscript, we can optimize this objective with the InfoNCE loss.

\paragraph{Avoiding degenerate solutions}
Algorithms that learn discriminative representations by alternating pseudo-labeling and updating the parameters of the network are susceptible to trivial solutions~\cite{caron2018deep}, referred to as degenerate solutions.
For example, if the majority of samples are assigned to a few clusters, it is easy to discriminate between features, but this is unfavorable for downstream tasks.
The proposed approach can avoid the tendency towards degenerate solutions since the method maximizes the mutual information (MI) between a feature and its label.
From Equation~\ref{general_js_and_mi}, we can observe that maximizing the MI, $I (Y;\boldsymbol{Z}_{U|Y})$, trades off maximizing the entropy, $H (\boldsymbol{Z}_{U|Y})$, and minimizing the conditional entropy, $H (\boldsymbol{Z}_{U|Y} | Y)$.
Only minimizing the conditional entropy can be vulnerable to the degenerate solutions, but the objective also includes the entropy maximization, which cannot be achieved in the degenerate solutions.
Therefore, the objective naturally balances discriminative representation learning with dispersed features and avoids the degenerate solutions.

\section{More details about the data processing inequality}
Here, we provide the formal derivation of $I(\boldsymbol{Z}_{U|Y}^{(1)}; \boldsymbol{Z}_{U|Y}^{(2)}) \leq I(Y;\boldsymbol{Z}_{U|Y}^{(1)}, \boldsymbol{Z}_{U|Y}^{(2)})$, presented in the main manuscript, using the \textit{data processing inequality}.

For a given $Y$, we sample two different data, $\boldsymbol{X}_{U|Y}^{(1)}$ and $\boldsymbol{X}_{U|Y}^{(2)}$, from the same conditional distribution, $\mathcal{D}^{\mathcal{X}}_{U|Y}$.
Then, we obtain $\boldsymbol{Z}_{U|Y}^{(1)}$ and $\boldsymbol{Z}_{U|Y}^{(2)}$ from $\boldsymbol{X}_{U|Y}^{(1)}$ and $\boldsymbol{X}_{U|Y}^{(2)}$, respectively, through the feature transformation, $g$.
Therefore, $Y$, $\boldsymbol{X}_{U|Y}^{(1)}$, $\boldsymbol{X}_{U|Y}^{(2)}$, $\boldsymbol{Z}_{U|Y}^{(1)}$, and $\boldsymbol{Z}_{U|Y}^{(2)}$ satisfy the Markov relation:

\begin{equation}
    \boldsymbol{Z}_{U|Y}^{(1)} \leftarrow \boldsymbol{X}_{U|Y}^{(1)} \leftarrow Y \rightarrow \boldsymbol{X}_{U|Y}^{(2)} \rightarrow \boldsymbol{Z}_{U|Y}^{(2)},    
\end{equation}

and this is Markov equivalent to

\begin{equation}
    \boldsymbol{Z}_{U|Y}^{(1)} \rightarrow \boldsymbol{X}_{U|Y}^{(1)} \rightarrow Y \rightarrow \boldsymbol{X}_{U|Y}^{(2)} \rightarrow \boldsymbol{Z}_{U|Y}^{(2)}.    
\end{equation}

By the data processing inequality, we know that

\begin{equation}
    I(\boldsymbol{Z}_{U|Y}^{(1)}; \boldsymbol{Z}_{U|Y}^{(2)}) \leq I(Y; \boldsymbol{Z}_{U|Y}^{(1)}).
\end{equation}

Meanwhile, we can observe that the following Markov relation holds.

\begin{equation}
    Y \rightarrow (\boldsymbol{X}_{U|Y}^{(1)}, \boldsymbol{X}_{U|Y}^{(2)}) \rightarrow (\boldsymbol{Z}_{U|Y}^{(1)}, \boldsymbol{Z}_{U|Y}^{(2)}) \rightarrow \boldsymbol{Z}_{U|Y}^{(1)}.    
\end{equation}

Therefore, by the data processing inequality, we have

\begin{equation}
    I(Y; \boldsymbol{Z}_{U|Y}^{(1)}) \leq I(Y; (\boldsymbol{Z}_{U|Y}^{(1)}, \boldsymbol{Z}_{U|Y}^{(2)})).
\end{equation}

Combining the above two inequalities yields the desired derivation.

\section{Pseudo-code}
\begin{algorithm}[H]
\SetAlgoLined
\SetKwInOut{Input}{Input}
\SetKwInOut{Output}{Output}
\Input{Labeled source data from $\mathcal{D}_S$ and unlabeled target data from $\mathcal{D}_T$.}
 Initialize encoders $g_q$ and $g_k$; classification head $h$; projection heads $l_q$ and $l_k$; and a queue of K keys\;
 \While{iteration < max\_iteration}{
  Cluster the target data using spherical K-means\;
  Split them into a certain dataset with pseudo-labels and an uncertain dataset\;
  Rearrange the source and certain target datasets to obtain uniform label distributions\;
  \For{$i \leftarrow 1$ \KwTo iterations\_per\_epoch}{
    Sample mini-batches of the source data $(\boldsymbol{x}_s, y_s)$, certain target data $(\boldsymbol{x}_{tc}, \hat{y}_{tc})$, and uncertain target data $(\boldsymbol{x}_{tu})$\;
    $\boldsymbol{x}_s^q = \mathrm{pre}\text{-}\mathrm{process}(\boldsymbol{x}_s)$,
    $\boldsymbol{x}_s^k = \mathrm{pre}\text{-}\mathrm{process}(\boldsymbol{x}_s)$\;
    $\boldsymbol{x}_{tc}^q = \mathrm{pre}\text{-}\mathrm{process}(\boldsymbol{x}_{tc})$,
    $\boldsymbol{x}_{tc}^k = \mathrm{pre}\text{-}\mathrm{process}(\boldsymbol{x}_{tc})$,
    $\boldsymbol{x}_{tu} = \mathrm{pre}\text{-}\mathrm{process}(\boldsymbol{x}_{tu})$\;
    $\boldsymbol{z}_s^q = g_q(\boldsymbol{x}_s^q)$, $\boldsymbol{z}_s^k = g_k(\boldsymbol{x}_s^k)$\;
    Compute $\mathcal{L}_\mathrm{s}$ on $(h(\boldsymbol{z}_s^q), y_s)$ using Equation (5)\;
    $\boldsymbol{w}_s^q = l_q(\boldsymbol{z}_s^q)$,
    $\boldsymbol{w}_s^k = l_k(\boldsymbol{z}_s^k)$\;
    $\boldsymbol{w}_{tc}^q = l_q(g_q(\boldsymbol{x}_{tc}^q))$,
    $\boldsymbol{w}_{tc}^k = l_k(g_k(\boldsymbol{x}_{tc}^k))$\;
    Forward the uncertain target data to train the batch normalization layers, $g_q(\boldsymbol{x}_{tu})$\;
    Merge $\boldsymbol{w}_s^q$ and $\boldsymbol{w}_{tc}^q$ to obtain $\boldsymbol{w}_u^q$,
    and merge $\boldsymbol{w}_s^k$ and $\boldsymbol{w}_{tc}^k$ to obtain $\boldsymbol{w}_u^k$\;
    $\mathrm{enqueue}(\mathrm{queue}, \boldsymbol{w}_u^k)$\;
    $\mathrm{dequeue}(\mathrm{queue})$\;
    Compute $\mathcal{L}_\mathrm{c}$ on $(\boldsymbol{w}_u^q, \mathrm{queue})$ using Equation (4)\;
    Update the query network parameters, $\theta_q$\, with SGD\;
    Momentum update the key network parameters, $\theta_k$\;
  }
 }
 \caption{Training procedure for JCL}
\end{algorithm}

\section{Selected hyper-parameters}
We tuned the weight hyper-parameter, $\gamma$, and distance threshold, $d$, for filtering the certain target data.
The weight hyper-parameter, $\gamma$, was searched within $\{0.1, 0.5, 1.0, 2.0\}$ and $\{0.2, 0.3, 0.4, 0.5\}$ for ImageCLEF-DA and VisDA-2017 datasets, respectively.
The distance threshold hyper-parameter, $d$, was searched within $\{0.05, 0.1, 1.0\}$.
The selected hyper-parameters for each task are listed in Table~\ref{selected_hyper_parameters}.

\begin{table}[ht]
  \small
  \caption{Selected hyper-parameters for ImageCLEF-DA and VisDA-2017 experiments.}
  \label{selected_hyper_parameters}
  \centering
  \begin{tabular}{cccccccc}
    \toprule
    Parameter
    & I $\rightarrow$ P
    & P $\rightarrow$ I
    & I $\rightarrow$ C
    & C $\rightarrow$ I
    & C $\rightarrow$ P
    & P $\rightarrow$ C
    & VisDA-2017
    \\
    \midrule
    $\gamma$ & 0.5 & 1.0 & 2.0 & 2.0 & 0.1 & 1.0 & 0.3 \\
    $d$ & 0.05 & 0.1 & 0.1 & 0.1 & 1.0 & 0.1 & 0.1 \\
    \bottomrule
  \end{tabular}
\end{table}

\section{Additional implementation details}
The temperature parameter, $\tau$, for the critic function was fixed to 0.05.
For ImageCLEF-DA and VisDA-2017 datasets, the queue size, considering the dataset sizes, was set to 4,096 and 32,768, respectively, and the momentum coefficient, m, of the momentum encoder to 0.9 and 0.99, respectively.
For the metric measuring the distances in the feature space, $\mathcal{Z}$, cosine dissimilarity was applied.
At the end of the encoders, we added L2 normalization layers.
Unlike other contrastive learning methods, we did not utilize additional data augmentation for fair comparison with domain adaptation baselines;
only random crop and horizontal flip were used.
We empirically found that it is beneficial to forward pass the uncertain target data to train the batch normalization layers.
For computing infrastructure, we used a GeForce RTX 2080 Ti GPU for all experiments.
The total iterations for the ImageCLEF-DA and VisDA-2017 experiments were 20,000 and 50,000, respectively, and they took 4 h and 15 h, respectively, on an average.

\section{VisDA-2017 full results}
\begin{table}[ht]
  \small
  \caption{Accuracy (\%) on VisDA-2017 for unsupervised domain adaptation (ResNet-101).}
  \label{visda_full_result}
  \centering
  \resizebox{\textwidth}{!}{\begin{tabular}{lccccccccccccc}
    \toprule
    Method 
    & \rotatebox[origin=l]{90}{airplane} 
    & \rotatebox[origin=l]{90}{bicycle} 
    & \rotatebox[origin=l]{90}{bus} 
    & \rotatebox[origin=l]{90}{car} 
    & \rotatebox[origin=l]{90}{horse} 
    & \rotatebox[origin=l]{90}{knife} 
    & \rotatebox[origin=l]{90}{motorcycle} 
    & \rotatebox[origin=l]{90}{person} 
    & \rotatebox[origin=l]{90}{plant} 
    & \rotatebox[origin=l]{90}{skateboard} 
    & \rotatebox[origin=l]{90}{train} 
    & \rotatebox[origin=l]{90}{truck} 
    & Average 
    \\
    \midrule
    ResNet-101~\cite{he2016deep} & 72.3 & 6.1 & 63.4 & \textbf{91.7} & 52.7 & 7.9 & 80.1 & 5.6 & 90.1 & 18.5 & 78.1 & 25.9 & 49.4 \\
    DANN~\cite{JMLR:v17:15-239}     & 81.9 & 77.7 & 82.8 & 44.3 & 81.2 & 29.5 & 65.1 & 28.6 & 51.9 & 54.6 & 82.8 & 7.8 & 57.4 \\
    DAN~\cite{pmlr-v37-long15}          & 68.1 & 15.4 & 76.5 & 87.0 & 71.1 & 48.9 & 82.3 & 51.5 & 88.7 & 33.2 & \textbf{88.9} & 42.2 & 62.8 \\
    JAN~\cite{long2017deep}             & 75.7 & 18.7 & 82.3 & 86.3 & 70.2 & 56.9 & 80.5 & 53.8 & 92.5 & 32.2 & 84.5 & 54.5 & 65.7 \\
    MCD~\cite{saito2018maximum}         & 87.0 & 60.9 & 83.7 & 64.0 & 88.9 & 79.6 & 84.7 & 76.9 & 88.6 & 40.3 & 83.0 & 25.8 & 71.9 \\
    ADR~\cite{saito2018adversarial}     & 87.8 & 79.5 & 83.7 & 65.3 & 92.3 & 61.8 & 88.9 & 73.2 & 87.8 & 60.0 & 85.5 & 32.3 & 74.8 \\
    SE~\cite{french2018selfensembling}  & 95.9 & 87.4 & \textbf{85.2} & 58.6 & 96.2 & 95.7 & 90.6 & 80.0 & 94.8 & 90.8 & 88.4 & 47.9 & 84.3 \\
    CAN~\cite{kang2019contrastive}      & \textbf{97.0} & 87.2 & 82.5 & 74.3 & \textbf{97.8} & \textbf{96.2} & \textbf{90.8} & 80.7 & \textbf{96.6} & \textbf{96.3} & 87.5 & 59.9 & 87.2 \\
    \midrule
    \multirow{2}{*}{JCL} & \textbf{97.0} & \textbf{91.3} & 84.5 & 66.8 & 96.1 & 95.6 & 89.8 & \textbf{81.5} & 94.7 & 95.6 & 86.1 & \textbf{71.8} & \textbf{87.6} \\
    & $\pm$ 0.1 & $\pm$ 0.5 & $\pm$ 1.7 & $\pm$ 2.1 & $\pm$ 0.3 & $\pm$ 0.6 & $\pm$ 0.8 & $\pm$ 0.7 & $\pm$ 0.2 & $\pm$ 0.9 & $\pm$ 0.3 & $\pm$ 0.4 & $\pm$ 0.2 \\
    \bottomrule
  \end{tabular}}
\end{table}

\end{document}